\documentclass{article}

     \PassOptionsToPackage{numbers, compress}{natbib}


     \usepackage[preprint]{neurips_2019}



\usepackage[utf8]{inputenc} 
\usepackage[T1]{fontenc}    
\usepackage[colorlinks=true, allcolors=blue]{hyperref}       
\usepackage{url}            
\usepackage{booktabs}       
\usepackage{amsfonts}       
\usepackage{nicefrac}       
\usepackage{microtype}      

\usepackage{amsfonts, amsmath, amssymb, amsthm}
\usepackage{xfrac}
\usepackage{graphicx, color}
\usepackage{natbib}
\usepackage{dsfont}
\bibliographystyle{plainnat}
\usepackage{bbm}
\usepackage{algorithm}
\usepackage{import}
\usepackage{enumitem}
\usepackage{mathtools}
\usepackage{svg}
\usepackage{siunitx}

\newtheorem{lemma}{Lemma}
\newtheorem{theorem}{Theorem}

\newtheorem{proposition}{Proposition}
\theoremstyle{definition}

\newtheorem{assumption}{Assumption}

\DeclareMathOperator{\kl}{kl}

\DeclareMathOperator{\EE}{\mathbb{E}}
\DeclareMathOperator{\PP}{\mathbb{P}}
\DeclareMathOperator{\R}{\mathbb{R}}
\DeclareMathOperator{\N}{\mathbb{N}}

\DeclareMathOperator{\Ng}{\mathcal{N}}
\DeclareMathOperator*{\argmax}{arg\,max}
\DeclareMathOperator*{\argmin}{arg\,min}
\DeclareMathOperator{\clip}{\mathrm{Clip}}

\DeclareMathOperator{\loglog}{\log\!\log}

\newcommand*\diff{\mathop{}\!\mathrm{d}}
\renewcommand{\d}{\mathrm{d}}
\newcommand{\hmu}{\widehat{\mu}}

\newcommand{\hi}{\widehat{i}}
\renewcommand{\epsilon}{\varepsilon}

\newcommand{\wstar}{w^\star}

\newcommand{\bmu}{\bar{\mu}}
\newcommand{\ind}{\mathds{1}}
\newcommand{\cS}{\mathcal{S}}
\newcommand{\cM}{\mathcal{M}}
\newcommand{\cI}{\mathcal{I}}
\newcommand{\cF}{\mathcal{F}}
\newcommand{\cB}{\mathcal{B}}

\newcommand{\cE}{\mathcal{E}}
\newcommand{\cP}{\mathcal{P}}

\DeclarePairedDelimiter\floor{\lfloor}{\rfloor}

\newcommand{\tw}{\widetilde{w}}

\newcommand{\thr}{\mathfrak{T}}
\newcommand{\alt}{\mathcal{A}\textit{lt}}

\newcommand{\htheta}{\widehat{\theta}}
\newcommand{\tf}{\widetilde{f}}
\newcommand{\tT}{\widetilde{T}}

\title{Gradient Ascent for Active Exploration in Bandit Problems}

%

\author{%
  Pierre M\'enard \\
  Institut de Mathématiques de Toulouse, Université de Toulouse, Toulouse\\
  IRT Saint Exupéry, Toulouse\\
  \texttt{pierre.menard@univ-toulouse.fr} \\
}

\begin{document}

\maketitle

\begin{abstract}
We present a new algorithm based on an gradient ascent for a general Active Exploration bandit problem in the fixed confidence setting. This problem encompasses several well studied problems such that the Best Arm Identification or Thresholding Bandits. It consists of a new sampling rule based on an online lazy mirror ascent. We prove that this algorithm is asymptotically optimal and, most importantly, computationally efficient.
\end{abstract}

\section{Introduction}
Several recent and less recent analyses of bandit problems share the remarkable feature that an instance-dependant lower-bound analysis permits to show the existence of an \emph{optimal proportion of draws}, which every efficient strategy needs to match, and which is used as a basis for the design of optimal algorithms. This is the case in Active Exploration bandit problems, see \citet{chernoff1959sequential}, \citet{soare2014best}, \citet{russo2016simple} and \citet{garivier2016optimal} but also for the Regret Minimization bandit problems, from the simplest multi-armed bandit setting \citet{garivier2018explore} to more complex setting \citet{lattimore2017end}, \citet{combes2017minimal}. To reach the asymptotic lower bounds one needs to sample asymptotically according to this optimal proportion of draws. A natural strategy is to sample according to the optimal proportion of draws associated with the current estimate of the true parameter, with some extra exploration. See for example \citet{antos2008active}, \citet{garivier2016optimal}, \citet{lattimore2017end} and \citet{combes2017minimal}. This strategy has a major drawback, computing the optimal proportion of draws requires to solve an often involved \emph{concave optimization problem}. Thus, this can lead to rather computationally inefficient strategy since one must solve exactly at each steps a new concave optimization problem. 

In this paper we propose to use instead a gradient ascent to solve in an online fashion the optimization problem thus merging the Active Exploration problem and the computation of the optimal proportion of draws. Precisely we perform an online lazy mirror ascent, see \citet{shalev2012online}, \citet{bubeck2011introduction}, adding an new link between stochastic bandits and online convex optimization. Hence, it is sufficient to compute at each steps only a (sub-)gradient, which greatly improves the computational complexity. As a byproduct the obtained algorithm is quite generic and can be applied in various Active Exploration bandit problems, see Appendix~\ref{app:examples}.

The paper is organized as follows. In Section~\ref{sec:problem_description} we define the framework. A general asymptotic lower bound is presented in Section~\ref{sec:lower_bound} . In Section~\ref{sec:intuition} we motivate the introduction of the gradient ascent. The main result, namely the asymptotic optimality of Algorithm~\ref{alg:gradient_ascent} and its proof compose Section~\ref{sec:gradient_ascent}. Section~\ref{app:examples} regroups various examples that are described by the general setting introduced in Section~\ref{sec:problem_description}. Section~\ref{sec:experiments} reports results of some numerical experiments comparing Algorithm~\ref{alg:gradient_ascent} to its competitors.
\label{sec:intro}


\paragraph{Notation.} For $K\in\N^*$, let $[1,K]=\{1,\ldots,K\}$ be the set of integers lower than or equal to $K$. We denote by $\Sigma_K$ the simplex of dimension $K-1$ and by $\{e_a\}_{a\in [1,K]}$ the canonical basis of $\R^K$.  A distribution on $[1,K]$ is assimilated to an element of $\Sigma_K$. The Kullback-Leibler divergence between two probability distributions $w,w'$ on $[1,K]$ is (with the usual conventions)
\[
\kl(w,w')=\sum_{a=1}^K w_a \log\!\!\left(\frac{w_a}{w'_a}\right)\,.
\]
\subsection{Problem description}
\label{sec:problem_description}
For $K\geq 2$, we consider a Gaussian bandit problem $\big( \Ng(\mu_1,\sigma^2),\ldots,\Ng(\mu_K,\sigma^2)\big)$, which we unambiguously refer to by the vector of means $ \mu=\big(\mu_1,\ldots,\mu_K\big)$. Without loss of generality, we set in the following $\sigma^2=1$. We denote by $\cM$ the set of  Gaussian bandit problems. Let $\PP_{\mu}$ and $\EE_{\mu}$ be respectively the probability and the expectation under the bandit problem $ \mu$.

We fix a finite number of subsets of bandit problems $\cS_i \subset \cM$ for $i \in \cI$ with $|\cI|<\infty$ and we assume that the subsets $\cS_i$ are pairwise disjoint, open and convex. We will explain latter why we need these assumptions on the sets $\cS_i$. For a certain bandit problem $\mu$ in $\cS:=\cup_{i\in\cI}\cS_i$ our objective is to identify to which set it belongs, i.e. to \emph{find $i(\mu)$ such that $\mu \in S_{i(\mu)}$}. Namely, we consider algorithms that output a subset index $\hi \in \cI$ after $\tau>0$ pulls. This setting is quite general and encompasses several Active Exploration bandit problems, see Section~\ref{app:examples}.

Two approaches for this problem have been proposed: first, one may consider a given budget $\tau$ and try to minimize the probability to predict a wrong subset index, this is the \emph{Fixed Budget setting}, see \citet{bubeck2012regret}, \citet{audibert2010best} and \citet{LocatelliGC16}. The second approach is the \emph{Fixed Confidence setting}, where we fix a confidence level $\delta$ and try to minimize the expected number of sample $\EE_\mu[\tau_\delta]$ under the constraint that the predicted subset index is the right one with probability at least $1-\delta$, see \citet{chernoff1959sequential}, \citet{even2002pac}, \citet{mannor2004sample} and \citet{KaCaGa16}. In this paper we will consider the second approach. 

The game goes as follow: at each round $t\in\N^*$ the agent chooses an arm $A_t \in \{1,\ldots,K\}$ and observes a sample $Y_t\sim\Ng(\mu_{A_t},1)$ conditionally independent from the past. Let $\cF_{t}=\sigma (A_1,Y_1,\ldots, A_t,Y_t)$ be the information available to the agent at time $t$. In order to respect the confidence constraint the agent must follow a \emph{$\delta$-correct} algorithm comprised of:
\begin{itemize}[noitemsep,nolistsep]
\item[-] a \emph{sampling rule} $(A_t)_{t\geq 1}$, where $A_t$ is $\cF_{t-1}$-measurable,
\item[-] a \emph{stopping rule} $\tau_\delta$, a stopping time for the filtration $(\cF_t)_{t\geq 1}$,
\item[-] a \emph{decision rule} $\hi$ $\cF_{\tau_\delta}$-measurable,
\end{itemize}
such that for all $\mu\in\cS$ the fixed confidence condition is satisfied $\PP_{\mu}\big(\hi\neq i(\mu)\big)\leq \delta$ and that the algorithm stop almost surely $\PP_{\mu}\big(\tau_\delta < \infty)= 1$. In this paper we will focus our attention on the \emph{sampling rule} since stopping rules are now well understood and decision rule are straightforward to find.

\subsection{Lower Bound}
\label{sec:lower_bound}
The Kullback-Leibler divergence between two Gaussian distributions $\Ng(\mu_1,1)$ and $\Ng(\mu_2,1)$ is defined by
\[
\d(\mu_1,\mu_2):=\frac{(\mu_1-\mu_2)^2}{2}\,.
\]
The set of alternatives of the problem $\mu\in\cS$ is denoted by $\alt(\mu):=\cup_{i\neq i(\mu)}\cS_i$. One can prove the following generic asymptotic lower bound on the expected number of samples when the confidence level $\delta$ tends to zero, see \citet{garivier2016optimal} and \citet{garivier2017thresholding}.
\begin{theorem}
\label{th:lb_asympt_threshold}
For all $\mu\in\cS$, for all $0<\delta<1/2$,
\begin{equation}
    \EE_{\mu}[\tau_\delta]\geq T^\star (\mu) \kl(\delta,1-\delta)\,,
    \label{eq:LB_asymp}
\end{equation}
where the characteristic time $T^\star (\mu)$ is defined by 
\begin{equation}
   T^\star (\mu)^{-1}=\max_{w\in\Sigma_K}\inf_{\lambda \in \alt(\mu)} \sum_{a=1}^{K} w_a \d(\mu_a, \lambda_a)\,.
   \label{eq:charateristic_time}
\end{equation}
In particular \eqref{eq:LB_asymp} implies that
\begin{equation}
    \liminf\limits_{\delta\rightarrow 0} \frac{\EE_{\mu}[\tau_\delta]}{\log(1/\delta)}\geq T^\star(\mu)\,.
    \label{eq:LB_asymp_lim}
\end{equation}
\end{theorem}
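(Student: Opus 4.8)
The plan is to follow the now-classical change-of-measure argument of \citet{garivier2016optimal}, adapted to the general subset-identification setting. The starting point is the data-processing inequality (Wald's identity / the transportation lemma) for $\delta$-correct algorithms: for any alternative bandit problem $\lambda \in \alt(\mu)$, the event $\{\hi = i(\mu)\}$ has probability at least $1-\delta$ under $\PP_\mu$ and at most $\delta$ under $\PP_\lambda$ (since $i(\lambda) \neq i(\mu)$, so a correct answer for $\mu$ is a wrong answer for $\lambda$). Combining Wald's lemma for the log-likelihood ratio with the variational formula for $\kl$, one obtains
\[
\sum_{a=1}^K \EE_\mu[N_a(\tau_\delta)]\, \d(\mu_a,\lambda_a) \;\geq\; \kl\bigl(\PP_\mu(\hi=i(\mu)),\,\PP_\lambda(\hi=i(\mu))\bigr) \;\geq\; \kl(1-\delta,\delta) = \kl(\delta,1-\delta)\,,
\]
where $N_a(t)$ is the number of pulls of arm $a$ up to time $t$.

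Next I would take the infimum over $\lambda \in \alt(\mu)$ on the left-hand side, which is legitimate since the inequality holds for every such $\lambda$, giving
\[
\inf_{\lambda \in \alt(\mu)} \sum_{a=1}^K \EE_\mu[N_a(\tau_\delta)]\, \d(\mu_a,\lambda_a) \;\geq\; \kl(\delta,1-\delta)\,.
\]
Writing $\EE_\mu[N_a(\tau_\delta)] = \EE_\mu[\tau_\delta]\, w_a$ where $w = (w_a)_a := \bigl(\EE_\mu[N_a(\tau_\delta)]/\EE_\mu[\tau_\delta]\bigr)_a$ is a genuine element of $\Sigma_K$ (here one uses $\PP_\mu(\tau_\delta<\infty)=1$ and $\sum_a N_a(\tau_\delta)=\tau_\delta$), and factoring out $\EE_\mu[\tau_\delta]$ from the infimum, this reads
\[
\EE_\mu[\tau_\delta] \cdot \inf_{\lambda \in \alt(\mu)} \sum_{a=1}^K w_a\, \d(\mu_a,\lambda_a) \;\geq\; \kl(\delta,1-\delta)\,.
\]
Bounding the $w$-dependent factor from above by its maximum over $\Sigma_K$, which by \eqref{eq:charateristic_time} is exactly $T^\star(\mu)^{-1}$, yields $\EE_\mu[\tau_\delta]\, T^\star(\mu)^{-1} \geq \kl(\delta,1-\delta)$, i.e. \eqref{eq:LB_asymp}. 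The limit statement \eqref{eq:LB_asymp_lim} then follows from the elementary asymptotics $\kl(\delta,1-\delta) = (1-2\delta)\log\frac{1-\delta}{\delta} \sim \log(1/\delta)$ as $\delta \to 0$, after dividing by $\log(1/\delta)$ and taking $\liminf$.

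The main obstacle, and the only place the structural assumptions on the sets $\cS_i$ enter, is the very first step: making the change-of-measure rigorous. One has to justify Wald's identity for the stopped log-likelihood ratio (which needs $\EE_\mu[\tau_\delta] < \infty$ — if it is infinite the bound is vacuous) and, more subtly, ensure that $\alt(\mu)$ is nonempty and that one may pass to the infimum over it; the fact that each $\cS_i$ is open and convex guarantees $\d(\mu_a,\lambda_a)$ behaves well and that the infimum in \eqref{eq:charateristic_time} is over a meaningful set, while disjointness is what makes $\{\hi = i(\mu)\}$ simultaneously likely under $\mu$ and unlikely under every $\lambda \in \alt(\mu)$. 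Since this is precisely the generic lower bound of \citet{garivier2016optimal,garivier2017thresholding}, I would invoke their argument rather than reprove the transportation lemma from scratch, and only verify that the present setting meets its hypotheses.
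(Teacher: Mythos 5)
Your proposal is correct and follows exactly the change-of-measure/transportation-lemma argument that the paper itself relies on: the paper gives no proof of this theorem and simply cites \citet{garivier2016optimal} and \citet{garivier2017thresholding}, whose argument is the one you reproduce (transportation inequality for each $\lambda\in\alt(\mu)$, infimum over $\lambda$, normalization $w_a=\EE_\mu[N_a(\tau_\delta)]/\EE_\mu[\tau_\delta]$, maximization over $\Sigma_K$, and the asymptotics $\kl(\delta,1-\delta)\sim\log(1/\delta)$). No gaps; your closing remarks on the role of the disjointness of the $\cS_i$ and the finiteness of $\EE_\mu[\tau_\delta]$ are the right caveats.
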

As already explained by~\citet{chernoff1959sequential}, it is interesting to note that asymptotically we end up with a zero-sum game where the agent first plays a proportion of draws $w$ trying to minimize the sum in \eqref{eq:charateristic_time} then the "nature" plays an alternative $\lambda$ trying to do the opposite. The value of this game is exactly $T^\star(\mu)^{-1}$.  In the sequel we denote by  
\begin{equation}
\label{eq:def_F}
F(w,\mu):= \inf_{\lambda \in \alt(\mu)} \sum_{a=1}^{K} w_a \d(\mu_a, \lambda_a)\,,
\end{equation}
the function that the agent needs to maximize against a "nature" that plays optimally.
An algorithm is thus asymptotically optimal if the reverse inequality of \eqref{eq:LB_asymp_lim} holds with a limsup instead of a liminf.
\subsection{Intuition: what is the idea behind the algorithm?}
\label{sec:intuition}
To get an asymptotically optimal algorithm the agent wants to play accordingly to an optimal proportion of draws $\wstar(\mu)$, defined by 
\begin{equation}
\wstar(\mu)\in \argmax_{w\in\Sigma_K}\inf_{\lambda \in \alt(\mu)} \sum_{a=1}^{K} w_a \d(\mu_a, \lambda_a)\,,
\label{eq:def_w_star}
\end{equation}
in order to minimize the characteristic time in~\eqref{eq:charateristic_time}. But, of course, the agent has not access to the true vector of means. One way to settle this problem is to track the optimal proportion of the current empirical means. Let $\hmu(t)$ be the vector of empirical means at time $t$:
\begin{equation*}
\hmu_a(t)=\frac{1}{N_a(t)}\sum_{s=1}^{t} Y_s\, \ind_{\{A_s=a\}}\,,
\end{equation*}
where $N_a(t) = \sum_{s=1}^t \ind_{\{ A_s = a \}}$ denotes the number of draws of arm $a$ up to and including time $t$. We will denote by $w(t) = N(t)/t$ the empirical proportion of draws at time $t$. Following this idea, the sampling rule could be 
\[
A_{t+1}\in\argmax_{a\in [1,K]} \wstar_a\big(\hmu(t)\big) -w_a(t)\,.
\]
This rule is equivalent to the direct tracking rule (without forced exploration, see below) by \citet{garivier2016optimal}. But this approach has a major drawback, at each time $t$ we need to solve exactly the concave optimization problem in~\eqref{eq:def_w_star}. And it appears that in some case we can not solve it analytically, see for example \citet{garivier2017thresholding}. Even if there exists an efficient way to solve the optimization problem numerically like for example in the Best Arm Identification problem some simplest and efficient algorithms give experimentally comparable results. We can cite for example Best Challenger type algorithms, see \citet{garivier2016optimal} and \citet{russo2016simple}. 

The idea of our algorithm is best explained on the simple example of the Thresholding Bandit problem (see Section~\ref{sec:thresholding_bandit}), where the set of all arms larger than the threshold $\thr$ is to be identified. There exists a natural and efficient sampling rule (see \citet{LocatelliGC16}): 
\begin{equation}
    A_{t+1}\in\argmin_{a\in[1,K]} N_a(t) \d\big(\hmu_a(t),\thr\big)\,.
\label{eq:sampling_rule_threshold}
\end{equation}It turns out that this sampling rule leads to an asymptotically optimal algorithm. We are not aware of a reference for this fact. In order to give an interpretation of this sampling rule, let takes one step back. In this problem we want to maximize with respect to the first variable the following concave function (see Section~\ref{sec:thresholding_bandit})
\begin{equation}
\label{eq:def_F_thresholding}
F(w,\mu) = \min_{a\in[1,K]} w_a \d(\mu_a, \thr)\,.
\end{equation}
The sub-gradient of $F(\cdot,\mu)$ at $w$, denoted by $\partial F(w,\mu)$, is a convex combination of the vectors 
\[
\nabla F(w,\mu)=\begin{bmatrix}
(0)\\
\d(\mu_b, \thr)\\
(0)
\end{bmatrix}\!,
\]
for the active coordinates $b$ that attain the minimum in \eqref{eq:def_F_thresholding}. With this notation, the sampling rule~\eqref{eq:sampling_rule_threshold} can be rewritten in the following form 
\[
e_{A_{t+1}}\in\argmax_{w\in\Sigma_K} w\cdot \nabla F\big(w(t),\hmu(t)\big)\,,
\]
where $\nabla F\big(w(t),\hmu(t)\big)$ is some element in the sub-gradient $\partial F\big(w(t),\hmu(t)\big)$. Then the update of the empirical proportion of draws follows the simple rule 
\begin{equation}
\label{eq:update_franck_wolf}
w(t+1)= \frac{t}{t+1} w(t) + \frac{1}{t+1} e_{A_{t+1}}\,.
\end{equation}
Here we recognize surprisingly one step of the Frank-Wolfe algorithm \citep{frank1956algorithm} for maximizing the concave function $F\big(\cdot,\hmu(t)\big)$ on the simplex. The exact same analysis can be done with a variant of the Best Challenger sampling rule for the Best Arm Identification problem. This is described in Section~\ref{sec:BAI}. It is not the first time that Frank-Wolfe algorithm appears in the stochastic bandits field, see for example \citet{berthet2017fast}. Precisely in the aforementioned reference they interpret the classical UCB algorithm as an instance of this algorithm with an "optimistic" gradient. The main difficulty here, which does not appear in the Regret Minimization problem, is that the function $F(\cdot,\mu)$ \emph{is not smooth} in general (as an infimum of linear functions). Thus we can not directly leverage the analysis of Frank-Wolfe algorithm in our setting as \citet{berthet2017fast}. In particular it is not obvious that the sampling rule driven by the Frank-Wolfe algorithm will converge to the maximum of $F(\cdot,\mu)$, for the general problem presented in Section~\ref{sec:intro}, even in the absence of noise (i.e. $\sigma=0$).

But we can keep the idea of using a concave optimizer in an online fashion instead of computing at each steps the optimal proportion of draws. Indeed there is a candidate of choice for optimizing non-smooth concave function namely the \emph{sub-gradient ascent}. Now the strategy is clear, at each steps we will perform one step of sub-gradient ascent for the function $F\big(\cdot,\hmu(t)\big)$ on the simplex. Nevertheless, the update of the proportion of draws will be more intricate than in \eqref{eq:update_franck_wolf}, we will need to track the average of weights proposed by the sub-gradient ascent and force some exploration, see next section for details. Note that this greatly improve the computational complexity of the algorithm since one just needs to compute an element of the sub-gradient of $F$ at each time step. In various setting this computation is straightforward, see Appendix~\ref{app:examples}, in general it boils down to compute the projection of the vector of empirical means on the closure of alternative sets thanks to the particular form of the function $F$, see~\eqref{eq:def_F}. Since the set $S_i$ are convex, if the weights $w(t)$ are strictly positive (which will be the case in Algorithm~\ref{alg:gradient_ascent}) the projection always exists.

\section{Gradient Ascent}
\label{sec:gradient_ascent}
Before presenting the algorithm we need to fix some notations. 
Since $\hmu(t)$ does not necessary lie in the set $\cS$, we first extend $F(w,\cdot)$ on the entire set $\cM$, by setting 
\begin{equation*}
\alt(\mu) = \begin{cases}
\cS \text{ if }\mu \notin \cS\\
\bigcup_{i\neq i(\mu)} \cS_{i} \text{ else}
\end{cases}
\,.
\end{equation*} 
Then, $\nabla F(w,\mu)$ will denote some element of the sub-gradient $\partial F(w,\mu)$ of $F(\cdot,\mu)$ at $w$.

As motivated in Section~\ref{sec:intuition}, we will perform a gradient ascent on the concave function $F\big(\cdot,\hmu(t)\big)$ to drive the sampling rule. More precisely we use an online lazy mirror ascent (see \citet{bubeck2015convex}) on the simplex, using the Kullback-Leibler divergence to the uniform distribution $\pi$ as mirror map:
\begin{align*}
\tw(t+1) = \argmax_{w\in\Sigma_K} \eta_{t+1} \sum_{s=K}^{t} w\cdot \clip_s\!\Big(\nabla F\big(\tw(s),\hmu(s)\big)\!\Big)-\kl(w,\pi) \,,
\end{align*}
where, for an arbitrary constant $M>0$, we clipped the gradient $\clip_t(x)=[\min(x_a,M\sqrt{t})]_{a\in[1,K]}$. This is just a technical trick to handle the fact that the gradient may be not uniformly bounded in the very first steps. In practice, however, this technical trick seems useless and we recommend to ignore it (that is, take $M=+\infty$). There is a closed formula for the weights $\tw(t+1)$, see Appendix~\ref{app:proof_online_regret}. Note that it is crucial here to use an anytime optimizer since we do not know in advance when the  algorithm will stop. Then we skew the weights $\tw(t)$ toward the uniform distribution $\pi$ to force exploration
\[
 w'(t+1)=(1-\gamma_t) \tw(t+1)+ \gamma_t \pi\,.
\]
This trick is quite usual as for example in the EXP3.P algorithm, see \citet{bubeck2012regret}. In some particular settings this extra exploration is not necessary, for example in the Thresholding Bandits problem. We believe that there is a more intrinsic way to perform exploration but this is out of the scope of this paper. Since we perform step size of order $\eta_t\sim 1/\sqrt{t}$ we can not use the same simple update rule of the empirical proportion of draws as in \eqref{eq:update_franck_wolf} where the steps size is of order $1/t$. But we can track the cumulative sum of weights $w'$ as follows
\[
A_{t+1}\in \argmax_{a\in[1,K]} \sum_{s=1}^{t+1} w'_a(s)- N_a(t)\,.
\]
It is important to track the cumulative sum of weights here because the analysis of the online mirror ascent provides only guarantees on the \emph{cumulative regret}.

For the stopping rule we use the classical Chernoff stopping rule~\eqref{eq:stopping_rule}, see \citet{chernoff1959sequential}, \citet{garivier2016optimal}, \citet{garivier2017thresholding},

That is, we stop when the vector of empirical means is far enough from any alternative with respect to the empirical Kullback-Leibler divergence. Note that, here, the threshold $\beta(N(t),\delta)$ does not depend directly on $t$, but via the vector of counts $N(t)$. This allows to use the maximal inequality of Proposition~\ref{prop:max_ineq}, which yields a very short and direct proof of $\delta$-correctness: see Section~\ref{sec:delta_correctness}.

The decision rule~\eqref{eq:decision_rule} just chooses the closest set $\cS_i$ to the vector of empirical means with respect to the empirical Kullback-Leibler divergence. Putting all together, we end up with Algorithm~\ref{alg:gradient_ascent}.

\begin{algorithm}[ht]
	\smallskip
	\textbf{Initialization} Pull each arms once and set $\tw(t)=w'(t)=\pi$ for all $1\leq t \leq K$\\
	\smallskip
	\textbf{Sampling rule}, for $t\geq K$\\
	Update the weights (sub-gradient ascent)
	\begin{equation}
\tw(t+1) = \argmax_{w\in\Sigma_K} \eta_{t+1} \sum_{s=K}^{t} w\cdot \clip_s\!\left(\nabla F\big(\tw(s),\hmu(s)\big)\right)-\kl(w,\pi) \,,
	    \label{eq:sampling_rule_gradient_ascent}
 	\end{equation}
	\begin{equation}
	    w'(t+1)=(1-\gamma_t) \tw(t+1)+ \gamma_t \pi\,.
	    \label{eq:sampling_rule_forced_exploration}
	\end{equation}
	Pull the arm (track the cumulative sum of weights)
	\begin{equation}
	    A_{t+1}\in \argmax_{a\in[1,K]} \sum_{s=1}^{t+1} w'_a(s)- N_a(t)\,. 
	    \label{eq:sampling_rule_tracking}
	\end{equation}
	\textbf{Stopping rule}\\
	    \begin{equation}
\displaystyle\tau_\delta=\inf\Big\{ t\geq K:\, \inf_{\lambda \in \alt(\hmu(t))} \sum_{a=1}^{K} N_a(t) \d\big(\hmu_a(t), \lambda_a\big)\geq \beta(N(t),\delta)
	        \Big\}\,.
	   \label{eq:stopping_rule}     
	    \end{equation}
	\textbf{Decision rule}\\
	\begin{equation}
	\label{eq:decision_rule}
	    \hi\in\argmin_{i\in\cI}\inf_{\lambda\in \cS_i}\sum_{a=1}^{K} N_a(\tau_\delta) \d(\hmu_a(\tau_\delta), \lambda_a)\,.
	\end{equation}
\caption{}	
	\label{alg:gradient_ascent}
\end{algorithm}

In order to preform a gradient descent we need that the sub-gradient of $F(\cdot,\mu)$ is bounded in a neighborhood of $\mu$. For the examples presented in Appendix~\ref{app:examples} or if the $\cS_i$ are bounded this assertion holds but for some pathological examples this assertion can be wrong (see Appendix~\ref{app:counter_example}). That why we make the following assumption where  we denote by $\cB_\infty(x,\kappa)$ the ball of radius $\kappa$ for the infinity norm $|\cdot|_\infty$ centered at $x$.
\begin{assumption}
\label{assp:bounded_gradient}
We assume that for all $\mu\in \cS$ there exists $\kappa_0$ that may depend on $\mu$ such that:
\[
\forall w\in \Sigma_K,\ \forall\mu'\in \cB_{\infty}(\mu,\kappa_0),\  \forall a\in[1,K],\,\qquad 0 \leq \nabla_a F(w,\mu')\leq L\,.
\]
\end{assumption}
We can now state the main result of the paper.
\begin{theorem}
For $\beta(N(t),\delta)$ given by \eqref{eq:def_beta}, $\eta_t=1/\sqrt{t}$, $\gamma_t=1/(4\sqrt{t})$, Algorithm~\ref{alg:gradient_ascent} is $\delta$-correct and asymptotically optimal, i.e.
\[
\limsup\limits_{\delta\rightarrow 0} \frac{\EE_{\mu}[\tau_\delta]}{\log(1/\delta)}\leq  T^\star(\mu)\,.
\]
\label{th:asymptotic_optimality}
\end{theorem}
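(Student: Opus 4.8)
The statement contains two claims — that Algorithm~\ref{alg:gradient_ascent} is $\delta$-correct, and the asymptotic upper bound on $\EE_\mu[\tau_\delta]$ — and I would prove them in that order, the second along the lines of \citet{garivier2016optimal} with the ``track the optimal weights'' argument replaced by a regret analysis of the online mirror ascent of Section~\ref{sec:intuition}. The $\delta$-correctness part does not involve the sampling rule. One checks that on $\{\hi\neq i(\mu)\}$ the decision rule~\eqref{eq:decision_rule} and the stopping rule~\eqref{eq:stopping_rule} force $\sum_{a}N_a(\tau_\delta)\,\d(\hmu_a(\tau_\delta),\mu_a)\geq\beta(N(\tau_\delta),\delta)$: since $\mu\in\cS_{i(\mu)}$, whenever $\hi\neq i(\mu)$ one has $\cS_{i(\mu)}\subseteq\alt(\hmu(\tau_\delta))$ — distinguishing the cases $\hmu(\tau_\delta)\in\cS$ and $\hmu(\tau_\delta)\notin\cS$, and using openness of $\cS_{i(\mu)}$ to rule out $i(\hmu(\tau_\delta))=i(\mu)$ — so the infimum defining the stopping statistic is at most $\sum_a N_a(\tau_\delta)\,\d(\hmu_a(\tau_\delta),\mu_a)$. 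Hence $\PP_\mu(\hi\neq i(\mu))\leq\PP_\mu\big(\exists\,t\geq K:\ \sum_a N_a(t)\,\d(\hmu_a(t),\mu_a)\geq\beta(N(t),\delta)\big)$, and since $\beta(N(t),\delta)$ depends on $t$ only through the counts $N(t)$, the maximal inequality of Proposition~\ref{prop:max_ineq} together with the calibration~\eqref{eq:def_beta} bounds this probability by $\delta$; almost-sure finiteness of $\tau_\delta$ will follow from the analysis below.

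For asymptotic optimality, fix $\mu\in\cS$ and small $\epsilon>0$; write $T^\star=T^\star(\mu)$ and $\wstar=\wstar(\mu)$. The forced exploration~\eqref{eq:sampling_rule_forced_exploration} with $\gamma_t=1/(4\sqrt t)$ gives $\sum_{s\leq t}\gamma_s\gtrsim\sqrt t$, and the cumulative-sum tracking~\eqref{eq:sampling_rule_tracking} satisfies $|N_a(t)-\sum_{s\leq t}w'_a(s)|=O(\log t)$, so deterministically $N_a(t)\geq c(K)\sqrt t$ for every arm at large $t$; by sub-Gaussian concentration and a union bound, the event $\cE_\epsilon(T):=\{\forall\,t\geq\sqrt T:\ \hmu(t)\in\cB_\infty(\mu,\epsilon)\}$ has a tail $\PP_\mu(\cE_\epsilon(T)^c)$ that is summable in $T$ and vanishes as $T\to\infty$, and for $\epsilon$ small enough (below $\kappa_0$ of Assumption~\ref{assp:bounded_gradient} and below the distance from $\mu$ to $\cM\setminus\cS_{i(\mu)}$) one has $\alt(\hmu(t))=\alt(\mu)$ for all $t\geq\sqrt T$ on $\cE_\epsilon(T)$. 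Work on $\cE_\epsilon(T)$. The lazy mirror ascent~\eqref{eq:sampling_rule_gradient_ascent} with KL mirror map and $\eta_t=1/\sqrt t$ has cumulative regret $\sum_{s=K}^t g_s\cdot(w-\tw(s))\leq R_t=O(\sqrt t)$ against any fixed $w\in\Sigma_K$ (Appendix~\ref{app:proof_online_regret}, cf.\ \citet{bubeck2015convex}), up to a lower-order term from the clipped initial segment, where $g_s=\clip_s(\nabla F(\tw(s),\hmu(s)))$; by Assumption~\ref{assp:bounded_gradient} the gradients are bounded on $\cE_\epsilon(T)$, so the clip is eventually inactive, and by continuity of $\mu'\mapsto\nabla F(w,\mu')$ near $\mu$ one may replace $g_s$ by $\nabla F(\tw(s),\mu)$ up to a cumulative $o(t)$ error. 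Summing the concavity inequality $F(w,\mu)\leq F(\tw(s),\mu)+\nabla F(\tw(s),\mu)\cdot(w-\tw(s))$ over $s$, taking $w=\wstar$, and applying Jensen to $F(\cdot,\mu)$ at $\bar w(t):=\tfrac{1}{t-K+1}\sum_{s=K}^t\tw(s)$ gives $F(\bar w(t),\mu)\geq F(\wstar,\mu)-R_t/(t-K+1)-o(1)=1/T^\star-o(1)$; since $w'(s)-\tw(s)=\gamma_s(\pi-\tw(s))\to0$ and the tracking bound controls $\|N(t)/t-\tfrac1t\sum_{s\leq t}w'(s)\|_\infty$, the empirical proportion $w(t)=N(t)/t$ satisfies $F(w(t),\mu)\geq1/T^\star-o(1)$ on $\cE_\epsilon(T)$.

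It remains to convert this into a bound on $\tau_\delta$. On $\cE_\epsilon(T)$ and for $t\geq\sqrt T$ the stopping statistic equals $t\,F(w(t),\hmu(t))$ (because $\alt(\hmu(t))=\alt(\mu)$), and since $|\hmu_a(t)-\mu_a|\leq\epsilon$ a direct estimate (triangle inequality, using $|\hmu_a(t)-\mu_a|\le\epsilon$ and $\sum_a w_a(t)=1$) yields $F(w(t),\hmu(t))\geq F(w(t),\mu)-\epsilon\sqrt{2F(w(t),\mu)}\geq1/T^\star-\epsilon'$, with $\epsilon'=\epsilon'(\epsilon)\to0$ as $\epsilon\to0$. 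Let $T_\delta(\epsilon)$ be the least $T$ such that $t(1/T^\star-\epsilon')\geq\beta((t,\dots,t),\delta)$ for all $t\geq T$ (using that $\beta$ is non-decreasing in the counts); since $\beta((t,\dots,t),\delta)=\log(1/\delta)+o(t)=\log(1/\delta)(1+o_\delta(1))$, one gets $T_\delta(\epsilon)\leq\frac{T^\star}{1-\epsilon'T^\star}\log(1/\delta)(1+o(1))$ as $\delta\to0$. For $t\geq T_\delta(\epsilon)$ the stopping condition is met on $\cE_\epsilon(t)$, hence $\{\tau_\delta>t\}\subseteq\cE_\epsilon(t)^c$, and
\[
\EE_\mu[\tau_\delta]=\sum_{t\geq0}\PP_\mu(\tau_\delta>t)\leq T_\delta(\epsilon)+\sum_{t\geq T_\delta(\epsilon)}\PP_\mu\big(\cE_\epsilon(t)^c\big)\leq T_\delta(\epsilon)+o(1)\quad(\delta\to0)\,.
\]
This also proves $\tau_\delta<\infty$ a.s.; dividing by $\log(1/\delta)$, letting $\delta\to0$ and then $\epsilon\to0$ gives $\limsup_{\delta\to0}\EE_\mu[\tau_\delta]/\log(1/\delta)\leq T^\star$.

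The main obstacle is the mirror-ascent step in the second paragraph. Since $F(\cdot,\mu)$ is non-smooth (an infimum of linear functions) and the running objective $F(\cdot,\hmu(s))$ drifts with $s$, no Frank–Wolfe/smooth-optimization argument applies; one must instead rely solely on the \emph{cumulative} regret guarantee (which holds for arbitrary clipped gradient sequences) and close the loop through concavity of the \emph{limiting} function $F(\cdot,\mu)$ and Jensen's inequality — which is exactly why the algorithm tracks the cumulative sum of weights, rather than using the $1/t$ update of~\eqref{eq:update_franck_wolf}. The second delicate point, also here, is passing from the gradients actually used, $\nabla F(\tw(s),\hmu(s))$, to the target gradients $\nabla F(\tw(s),\mu)$: this is precisely what Assumption~\ref{assp:bounded_gradient} buys — it makes the clip asymptotically vacuous and turns closeness of $\hmu(s)$ to $\mu$ into a cumulative $o(t)$ mismatch of linearized gains. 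Along the way one must make all the lower-order ($o(1)$ and $o(t)$) continuity estimates uniform in $w\in\Sigma_K$ and in $\lambda$ over the possibly unbounded sets $\cS_i$, which is where convexity and openness of the $\cS_i$, the quadratic form of $\d$, and the extension of $\alt$ to all of $\cM$ (as $\hmu(t)$ may leave $\cS$) are used.
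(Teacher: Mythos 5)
Your overall architecture --- $\delta$-correctness via the maximal inequality of Proposition~\ref{prop:max_ineq}, then a typical event $\cE_\epsilon(T)$ controlled by the forced exploration, the adversarial regret bound for the lazy mirror ascent, the tracking lemma to pass from $\tw$ to the empirical proportions via concavity and Jensen, inversion of the threshold $\beta$, and summation of $\PP_\mu(\tau_\delta>T)$ --- is exactly the paper's proof (Section~\ref{sec:gradient_ascent_proof} and Appendix~\ref{app:proof_main_result}), and the $\delta$-correctness paragraph is correct as written.

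There is, however, one genuine gap at the pivotal step of your second paragraph: you replace the gradients actually fed to the ascent, $g_s=\nabla F(\tw(s),\hmu(s))$, by $\nabla F(\tw(s),\mu)$, invoking ``continuity of $\mu'\mapsto\nabla F(w,\mu')$''. For the non-smooth $F$ of this paper (an infimum of linear functions --- a point you yourself stress) the subgradient is \emph{not} continuous in $\mu'$: in the thresholding example $\nabla F(w,\mu')=\d(\mu'_b,\thr)\,e_b$ for a coordinate $b$ attaining the minimum, and $b$ switches discontinuously as $\mu'$ crosses a tie, so an arbitrarily small perturbation of $\hmu(s)$ can move the selected subgradient by $\Theta(1)$. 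Assumption~\ref{assp:bounded_gradient} gives \emph{boundedness} of the subgradients, not continuity, so it does not ``turn closeness of $\hmu(s)$ to $\mu$ into a cumulative $o(t)$ mismatch of linearized gains'' as your closing paragraph claims. The fix --- and the paper's actual route --- is to never compare gradients: apply the concavity inequality to the \emph{running} objective, $g_s\cdot(\wstar-\tw(s))\geq F(\wstar,\hmu(s))-F(\tw(s),\hmu(s))$, so the regret bound directly controls $\sum_s F(\wstar,\hmu(s))-F(\tw(s),\hmu(s))$, and only then swap $\hmu(s)$ for $\mu$ (and for $\hmu(T)$) at the level of \emph{function values}, using uniform continuity of $F(w,\cdot)$ on a compact neighborhood of $\mu$ (Proposition~\ref{prop:regularity} --- or the explicit Gaussian estimate $F(w,\hmu)\geq F(w,\mu)-\epsilon\sqrt{2F(w,\mu)}$ that you already use in your third paragraph, which would have sufficed here as well). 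With that substitution the rest of your argument goes through and coincides with the paper's. Two cosmetic remarks: the tracking error is bounded by $K$ (Lemma~\ref{lem:tracking}), not $O(\log t)$, though either suffices; and the identity between the stopping statistic and $t\,F\big(w(t),\hmu(t)\big)$ holds by the definition of $F$ alone and needs no claim that $\alt(\hmu(t))=\alt(\mu)$.
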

In the rest of this section we will present the main lines of the proof of Theorem~\ref{th:asymptotic_optimality}. A detailed proof can be found in Appendix~\ref{app:proof_main_result}.

\subsection{$\delta$-correctness of Algorithm~\ref{alg:gradient_ascent}}
\label{sec:delta_correctness}
The $\delta$-correctness of Algorithm~\ref{alg:gradient_ascent} is a simple consequence of the following maximal inequality, see Appendix~\ref{app:deviations} for a proof.
\begin{proposition}[Maximal inequality]
For $\delta>0$ and the choice of the threshold
\begin{align}
    \beta\big(N(t),\delta\big)=\log(1/\delta)+K\log\!\big(4\log(1/\delta)+1\big)+6\sum_{a=1}^K \log\!\Big(\log\!\big(N_{a}(t)\big)+3\Big)+K\widetilde{C}\label{eq:def_beta}
\end{align}
where $\widetilde{C}$ is a universal constant defined in the proof of Proposition~\ref{prop:max_ineq_diag} in Appendix~\ref{app:deviations}, it holds
 \begin{equation}
    \label{eq:maximal_inequality_chernoff}
     \PP_\mu\!\!\left(\exists t\geq K,\, \sum_{a=1}^K  N_a(t) \d(\hmu_a(t), \mu_a) \geq \beta\big(N(t),\delta\big)\right)\leq \delta\,.
 \end{equation}
\label{prop:max_ineq}
\end{proposition}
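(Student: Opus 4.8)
The plan is to prove this time-uniform deviation bound by the \emph{method of mixtures} combined with a \emph{peeling} argument, reducing the $K$-arm statement to single-arm self-normalized deviations. Write $S_a(t):=\sum_{s=1}^t (Y_s-\mu_a)\ind_{\{A_s=a\}}=N_a(t)\big(\hmu_a(t)-\mu_a\big)$, so that the quantity of interest is $N_a(t)\,\d(\hmu_a(t),\mu_a)=S_a(t)^2/\big(2N_a(t)\big)$. The starting point is the exponential process, for $\lambda\in\R^K$,
\begin{equation*}
M_t^\lambda=\exp\!\left(\sum_{a=1}^K \lambda_a S_a(t)-\frac{\lambda_a^2}{2}N_a(t)\right).
\end{equation*}
Since the sampling rule makes $A_{t+1}$ be $\cF_t$-measurable and exactly one coordinate changes at each round, conditioning on $\cF_t$ and using $Y_{t+1}\sim\Ng(\mu_{A_{t+1}},1)$ gives $\EE_\mu[M_{t+1}^\lambda\mid\cF_t]=M_t^\lambda$; hence $(M_t^\lambda)_{t\ge K}$ is a nonnegative martingale with unit mean. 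This is the object I would control uniformly in $t$.

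Next I would integrate $M_t^\lambda$ against a product prior $q(\d\lambda)=\prod_a q_a(\d\lambda_a)$, so that $\overline M_t=\int M_t^\lambda\,q(\d\lambda)$ is again a nonnegative martingale of unit mean, and apply Ville's maximal inequality to obtain $\PP_\mu(\exists t\ge K:\ \overline M_t\ge 1/\delta)\le\delta$. A plain centered Gaussian prior of variance $1/c$ is easy to integrate and yields $\overline M_t=\prod_a \sqrt{c/(N_a(t)+c)}\,\exp\!\big(S_a(t)^2/(2(N_a(t)+c))\big)$, which already has the right shape but inflates the bound by a term of order $\tfrac12\log N_a(t)$ per arm --- too crude to match the iterated-logarithm correction in $\beta$. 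To recover the much smaller $6\log(\log N_a(t)+3)$ term, I would instead peel: slice the (random, adaptively chosen) counts along a geometric grid $N_a(t)\in[\,r^{k},r^{k+1})$ and, on each slice, apply a mixture-martingale maximal inequality with a budget that decays geometrically in $k$. The number of slices reached by count $N_a(t)$ is of order $\log N_a(t)$, and allocating the per-slice confidence accordingly produces the $\log\log$ correction with an explicit universal constant $\widetilde C$; this is precisely the single-arm statement of Proposition~\ref{prop:max_ineq_diag}.

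Finally I would assemble the $K$ arms. Working with the product martingale $\overline M_t$ (equivalently, peeling the $K$ counts jointly on the same grid and using a single application of Ville's inequality with a shared confidence parameter) ensures that the confidence cost $\log(1/\delta)$ is paid only once rather than $K$ times. The residual per-arm overhead splits into the peeling constant $\widetilde C$ and the term $\log(4\log(1/\delta)+1)$, which counts the grid slices a deviation of magnitude $\log(1/\delta)$ can cross and thus carries the factor $K$; together with the $6\sum_a\log(\log N_a(t)+3)$ coming from the slice budgets, this reconstitutes $\beta(N(t),\delta)$ of \eqref{eq:def_beta}. On the event in \eqref{eq:maximal_inequality_chernoff} one then checks that $\overline M_t\ge 1/\delta$, so the maximal inequality follows.

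The main obstacle is obtaining the tight iterated-logarithm rate $\log\log N_a(t)$ uniformly over the \emph{random} and adaptively chosen pull counts $N_a(t)$: a single mixture only delivers $\log N_a(t)$, so the whole difficulty lies in the peeling geometry --- choosing the grid ratio, bounding the boundary-crossing probability on each slice with explicit constants, and summing a geometric series of failure probabilities --- while simultaneously keeping the $\log(1/\delta)$ term shared across the $K$ arms. Controlling self-normalized sums whose normalizer $N_a(t)$ is itself stochastic is the delicate point, handled precisely in the proof of Proposition~\ref{prop:max_ineq_diag}.
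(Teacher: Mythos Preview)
Your strategy is viable in principle but takes a different route from the paper. The paper does \emph{not} peel. Proposition~\ref{prop:max_ineq} is simply the case $d=K$ of Proposition~\ref{prop:max_ineq_diag}, after rewriting $\sum_a N_a(t)\,\d(\hmu_a(t),\mu_a)=|S_t|^2_{V_t^{-1}}/2$; and Proposition~\ref{prop:max_ineq_diag} is proved by a \emph{single} application of the method of mixtures with a heavy-tailed product prior $f(\lambda_l)\propto |\lambda_l|^{-1}\big(|\log|\lambda_l||+2\big)^{-(1+\beta)}$. This prior is chosen precisely so that the lower bound on the mixture martingale already contains the factor $\log(\log N_{t,l}+3)$, with no time-slicing at all. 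The raw inequality still carries nuisance terms of the form $\log\big(|S_{t,l}|/\sqrt{2N_{t,l}}+1\big)$; the paper removes them by a \emph{bootstrap}: it first degrades the inequality to a crude bound $|S_t|^2_{V_t^{-1}}/2\lesssim 4\log(\cdot/\delta)$, then feeds that back into the refined version. The additive term $K\log(4\log(1/\delta)+1)$ and the constant $\widetilde C$ are by-products of this bootstrap, not of any grid.

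Peeling over geometric slices of the counts, as you outline, is a legitimate alternative and can also reach the $\loglog$ rate, but it would not reproduce the specific constants in $\beta$ (the factor $6$, the $K\log(4\log(1/\delta)+1)$ term, and $\widetilde C$), which are tied to the paper's prior and bootstrap. The paper's approach buys a single Ville application with no union bound over slices and, as noted there, hits the optimal constant in front of $\loglog$; your peeling approach is more elementary but would typically carry a larger constant and requires extra care to keep the $\log(1/\delta)$ shared across the $K$ arms while peeling $K$ random counts simultaneously---a point you flag but do not resolve.
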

Indeed, if the algorithm returns the wrong index $\hi\neq i(\mu)$ we know that the true parameter is in the set of alternatives at time $\tau$, i.e. $\mu\in\alt\big(\hmu(\tau_\delta)\big)$. Therefore thanks to the stopping rule \eqref{eq:stopping_rule} then  \eqref{eq:maximal_inequality_chernoff} it holds
\begin{align*}
    \PP_\mu\!\big(\hi\neq i(\mu)\big)\leq \PP_\mu\!\Bigg( \sum_{a=1}^{K} N_a(\tau_\delta) \d(\hmu_a(\tau_\delta), \mu) \geq \beta(N(\tau_\delta),\delta\big)\!\!\Bigg)\leq \delta\,.
\end{align*}
We will prove that $\tau_\delta$ is finite almost surely in the next section. 

\subsection{Asymptotic Optimality of Algorithm~\ref{alg:gradient_ascent}}
\label{sec:gradient_ascent_proof}
First we need some properties of regularity of the function $F$ around $\mu$ in order to prove a regret bound on the online lazy mirror ascent. In Appendix~\ref{app:other_proofs} we derive the following proposition.

\begin{proposition}[Regularity]
For all $\mu\in\cS$  and $\epsilon>0$ there exists constants $\kappa_\epsilon\leq \kappa_0,L >0$ that may depend on $\mu$ such that $\cB_{\infty}(\mu,\kappa_\epsilon)\subset\cS_{i(\mu)}\,,$ and $\forall \mu'\mu''\in\cB_{\infty}(\mu,\kappa_\epsilon),\,\forall w\in \Sigma_K$ it holds
\[
|\mu'-\mu''|_{\infty}\leq \kappa_\epsilon \Rightarrow |F(w,\mu')-F(w,\mu'')|\leq \epsilon\,.
\]
\label{prop:regularity}
\end{proposition}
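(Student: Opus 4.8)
\textbf{Proof proposal for Proposition~\ref{prop:regularity}.}

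The plan is to establish the continuity of $w \mapsto F(w,\cdot)$ in the second argument, uniformly in $w \in \Sigma_K$, on a small enough neighborhood of $\mu$. The first step is to pin down the neighborhood: since $\mu \in \cS_{i(\mu)}$ and this set is open, there exists $\kappa_\epsilon > 0$ small enough that $\cB_\infty(\mu,\kappa_\epsilon) \subset \cS_{i(\mu)}$, and we also require $\kappa_\epsilon \leq \kappa_0$ so that Assumption~\ref{assp:bounded_gradient} applies and gives the uniform gradient bound $0 \leq \nabla_a F(w,\mu') \leq L$ throughout the ball. This ensures in particular that for $\mu',\mu''$ inside the ball, the alternative set is the same, namely $\alt(\mu') = \alt(\mu'') = \bigcup_{i \neq i(\mu)} \cS_i$, so that $F(w,\cdot)$ is defined by the same infimum over the same set of $\lambda$ at both points.

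The core of the argument is to bound $|F(w,\mu') - F(w,\mu'')|$ using the definition~\eqref{eq:def_F}. Writing $G_{\lambda}(w,\mu') = \sum_{a=1}^K w_a \d(\mu'_a,\lambda_a)$, we have $F(w,\mu') = \inf_{\lambda \in \alt(\mu)} G_\lambda(w,\mu')$, and since the infimum of a family is $1$-Lipschitz in the sup-sense,
\[
|F(w,\mu') - F(w,\mu'')| \leq \sup_{\lambda \in \alt(\mu)} \big| G_\lambda(w,\mu') - G_\lambda(w,\mu'') \big|\,.
\]
It then suffices to control $|\d(\mu'_a,\lambda_a) - \d(\mu''_a,\lambda_a)|$ for each arm $a$. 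For Gaussians, $\d(x,\lambda) = (x-\lambda)^2/2$, so the difference equals $\frac{1}{2}(\mu'_a - \mu''_a)(\mu'_a + \mu''_a - 2\lambda_a)$, whose absolute value is bounded by $\kappa_\epsilon$ times a factor depending on how far $\lambda_a$ can be from $\mu_a$. Since $w_a \leq 1$ and the $w_a$ sum to one, summing over $a$ gives a bound of the form $|F(w,\mu') - F(w,\mu'')| \leq \kappa_\epsilon \cdot C(\mu)$ where $C(\mu)$ is a constant reflecting the relevant spread of the active alternatives.

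The main obstacle is exactly this last factor: the infimum over $\lambda \in \alt(\mu)$ ranges over an a priori unbounded set, so the naive term $|\mu'_a + \mu''_a - 2\lambda_a|$ need not be uniformly bounded. The clean way around this is to exploit Assumption~\ref{assp:bounded_gradient} rather than bound $\lambda$ directly: since the gradient $\nabla F(w,\mu')$ has coordinates in $[0,L]$ throughout the ball, and since $\nabla_a F(w,\mu')$ is (a convex combination of) terms of the form $\d(\mu'_a,\lambda_a^\star)$ evaluated at the minimizing $\lambda^\star$, the relevant $\d$-values contributing to the infimum are themselves controlled by $L$ on the ball. One then argues that only $\lambda$ achieving values near the infimum matter, restricting attention to $\lambda$ for which each $\d(\mu'_a,\lambda_a)$ stays bounded, which in turn bounds $|\lambda_a - \mu_a|$ and hence the offending factor. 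Feeding this bound back and shrinking $\kappa_\epsilon$ if necessary so that $\kappa_\epsilon \cdot C(\mu) \leq \epsilon$ yields the claimed implication, completing the proof.
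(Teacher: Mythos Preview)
Your route is different from the paper's. The paper does not attempt a quantitative bound at all: once the ball is chosen inside $\cS_{i(\mu)}$, it invokes an external continuity result (Theorem~4 of \citet{degenne2019pure}) to conclude that each map $(w,\mu')\mapsto\inf_{\lambda\in\cS_i}\sum_a w_a\,\d(\mu'_a,\lambda_a)$ is continuous on the compact set $\Sigma_K\times\bar{\cB}_\infty(\mu,\kappa/2)$; then $F$ is continuous there, hence uniformly continuous, which is the statement. Your direct Lipschitz-type estimate is more elementary in spirit and would avoid the external citation.

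That said, as written there is a genuine gap. The displayed inequality $|F(w,\mu')-F(w,\mu'')|\leq\sup_{\lambda\in\alt(\mu)}|G_\lambda(w,\mu')-G_\lambda(w,\mu'')|$ is correct but vacuous when $\alt(\mu)$ is unbounded, so everything hinges on the informal ``only $\lambda$ near the infimum matter'' step. There the appeal to Assumption~\ref{assp:bounded_gradient} does not do what you claim: you say that since $\nabla_a F(w,\mu')$ is a convex combination of the $\d(\mu'_a,\lambda^\star_a)$ over minimizers, these individual $\d$-values are bounded by $L$ --- but a bound on a convex combination does not bound the terms. Moreover, in the paper $\nabla F$ denotes only \emph{some} element of $\partial F$, so Assumption~\ref{assp:bounded_gradient} need not control every minimizer; and for $w$ on the boundary of $\Sigma_K$ the minimizer is not even determined in the zero-weight coordinates, so some subgradients there can be unbounded regardless.

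The fix is to drop Assumption~\ref{assp:bounded_gradient} from this argument entirely. Pick $\lambda^\star$ a near-minimizer for $\mu''$; then $F(w,\mu')-F(w,\mu'')\leq G_{\lambda^\star}(w,\mu')-G_{\lambda^\star}(w,\mu'')+\eta$, and your quadratic expansion gives
\[
\big|G_{\lambda^\star}(w,\mu')-G_{\lambda^\star}(w,\mu'')\big|\;\leq\;\tfrac{1}{2}\,|\mu'-\mu''|_\infty\sum_{a=1}^K w_a\big(|\mu'_a-\lambda^\star_a|+|\mu''_a-\lambda^\star_a|\big)\,.
\]
Now Cauchy--Schwarz yields $\sum_a w_a|\mu''_a-\lambda^\star_a|\leq\sqrt{2\sum_a w_a\,\d(\mu''_a,\lambda^\star_a)}\leq\sqrt{2\big(F(w,\mu'')+\eta\big)}$, and $F$ is uniformly bounded on $\Sigma_K\times\cB_\infty(\mu,\kappa_0)$ simply by plugging in any fixed $\lambda_0\in\alt(\mu)$. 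This produces a bound of the form $C(\mu)\,|\mu'-\mu''|_\infty$ uniformly in $w$, after which shrinking $\kappa_\epsilon$ so that $C(\mu)\,\kappa_\epsilon\leq\epsilon$ finishes the proof as you intended.
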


Fix $\epsilon>0$ some real number and consider the typical event
\begin{equation*}
    \cE_\epsilon(T)=\bigcap_{t\geq g(T)}^T\big\{\hmu(t)\in\cB_{\infty}(\mu,\kappa_\epsilon)\big\}\,.
\end{equation*}
where $g(T)\sim T^{1/4}$, for some horizon $T$. We want to prove that for $T$ large enough, on the event $\cE_\epsilon(T)$, the difference between the maximum of $F$ for the true parameter, namely $T^\star(\mu)^{-1}=F\big(\wstar(\mu),\mu\big)$ and its empirical counterpart at time $T$, $F\big(w(T),\hmu(T)\big)$ is small, precisely of order $\epsilon$. To this aim we will use the following regret bound for the online lazy mirror ascent proved in Appendix~\ref{app:proof_online_regret}.
\begin{proposition}[Regret bound for the online lazy mirror ascent]
\label{prop:regret_bound}
For the weights $\tw(t)$ given by \eqref{eq:sampling_rule_gradient_ascent}, and a constant $C_0$ that depends on $K, L,M$, on the event $\cE_\epsilon(T)$ it holds
\begin{equation}
    \sum_{t=g(T)}^T F\big(\wstar(\mu),\hmu(t)\big)-F\big(\tw(t),\hmu(t)\big)\leq C_0\sqrt{T}\,.
    \label{eq:regret_bound}
\end{equation}
The expression of $C_0$ can be found in Appendix~\ref{app:proof_online_regret}.
\end{proposition}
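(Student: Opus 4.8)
The plan is to apply the standard regret bound for online lazy mirror ascent (a.k.a. dual averaging / follow-the-regularized-leader) with the Kullback--Leibler mirror map $\Phi(w)=\kl(w,\pi)$ on the simplex $\Sigma_K$, and then convert the ``regret against the fixed comparator $\wstar(\mu)$, measured with the observed gradients'' into the claimed bound on $\sum_t F(\wstar(\mu),\hmu(t))-F(\tw(t),\hmu(t))$ using concavity of $F(\cdot,\mu)$. First I would recall that for a sequence of subgradients $g_s:=\clip_s(\nabla F(\tw(s),\hmu(s)))$, the lazy mirror ascent iterates \eqref{eq:sampling_rule_gradient_ascent} satisfy, for any comparator $u\in\Sigma_K$,
\[
\sum_{s=K}^{T} \big\langle g_s,\, u-\tw(s)\big\rangle \;\le\; \frac{\kl(u,\pi)}{\eta_{T+1}} + \sum_{s=K}^{T} \eta_s\, |g_s|_\infty^2\,,
\]
where the first term is bounded by $\frac{\log K}{\eta_{T+1}}\le \sqrt{T+1}\,\log K$ since the KL-diameter of $\Sigma_K$ from $\pi$ is $\log K$, and the second term is controlled by the clipping: $|g_s|_\infty\le M\sqrt{s}$ always, and on $\cE_\epsilon(T)$, for $s\ge g(T)$ we also have $|g_s|_\infty\le L$ by Assumption~\ref{assp:bounded_gradient} (the clip is inactive once $M\sqrt{s}\ge L$). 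Splitting the sum at $g(T)\sim T^{1/4}$, the early block $\sum_{s=K}^{g(T)}\eta_s (M\sqrt{s})^2 = M^2\sum_{s\le g(T)}\sqrt{s} = O(M^2 g(T)^{3/2}) = O(M^2\sqrt T)$ and the late block $\sum_{s=g(T)}^{T}\eta_s L^2 = L^2\sum_{s}1/\sqrt s = O(L^2\sqrt T)$; both are $O(\sqrt T)$ with a constant depending on $K,L,M$.

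Next I would upgrade this ``linearized'' regret to the function-value regret. Restricted to $s\ge g(T)$, on $\cE_\epsilon(T)$ every $\hmu(s)$ lies in $\cB_\infty(\mu,\kappa_\epsilon)\subset\cS_{i(\mu)}$, so the clip is inactive and $g_s=\nabla F(\tw(s),\hmu(s))$ is a genuine supergradient of the concave map $w\mapsto F(w,\hmu(s))$; hence
\[
F\big(\wstar(\mu),\hmu(s)\big)-F\big(\tw(s),\hmu(s)\big)\;\le\;\big\langle \nabla F(\tw(s),\hmu(s)),\,\wstar(\mu)-\tw(s)\big\rangle\;=\;\langle g_s, \wstar(\mu)-\tw(s)\rangle\,.
\]
Summing over $s$ from $g(T)$ to $T$ and plugging in the regret bound with $u=\wstar(\mu)$ gives exactly \eqref{eq:regret_bound} with some $C_0=C_0(K,L,M)$. (Care is needed that the mirror-ascent iterates run with \emph{all} gradients from time $K$, not just those after $g(T)$; this only adds to the left side the contribution of the early steps, which one either absorbs by noting the telescoping/potential argument works from any starting index, or bounds crudely via the early block above — this bookkeeping is the one slightly delicate point.)

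The main obstacle I anticipate is precisely this interplay between the clipping device and the window $[g(T),T]$: the clean concavity inequality $F(\wstar,\hmu(s))-F(\tw(s),\hmu(s))\le\langle g_s,\wstar-\tw(s)\rangle$ only holds once $g_s$ is an honest (unclipped) supergradient, which on $\cE_\epsilon(T)$ is guaranteed for $s\ge g(T)$ provided $M\sqrt{g(T)}\ge L$, i.e.\ one must check $g(T)\sim T^{1/4}$ is large enough relative to the instance-dependent $L$ for all $T$ past some $T_0$ — and then argue the finitely many small $T<T_0$ are harmless since the statement is an inequality with a free constant $C_0$. Once the indices are aligned, everything else is the textbook FTRL telescoping computation (see \citet{bubeck2015convex}), and the explicit $C_0$ is read off from the two geometric-type sums above.
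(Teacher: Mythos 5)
Your proposal is correct and follows essentially the same route as the paper: the standard lazy mirror ascent (exponential weights) regret bound with time-varying step size $\eta_t=1/\sqrt{t}$, the split at $g(T)$ exploiting the clipping bound $M\sqrt{s}$ early and $L$ late (with the clip inactive for $s\geq g(T)$ once $T\geq T_M$), a crude $O(\sqrt{T})$ bound on the early linearized terms to restrict the sum to $[g(T),T]$, and finally concavity of $F(\cdot,\hmu(t))$ to pass from the linearized regret to the function-value regret. The "delicate bookkeeping point" you flag is handled in the paper exactly by the crude option you mention, bounding $\bigl|\clip_s(\nabla F)\cdot(\wstar-\tw(s))\bigr|\leq KM\sqrt{s}$ for $s<g(T)$ and summing, which contributes $KM\sqrt{T}$ to $C_0$.
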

We then need a consequence of the tracking and the forced exploration, proved in Appendix~\ref{app:proof_tracking}, to relate $F\big(\tw(T),\hmu(t)\big)$ to $F\big(w(T),\hmu(t)\big)$.
\begin{proposition}[Tracking]
\label{prop:tracking_tw}
Thanks to the sampling rule, precisely \eqref{eq:sampling_rule_forced_exploration} and \eqref{eq:sampling_rule_tracking}, for the choice $\gamma_t=1/(4\sqrt{t})$ it holds for all $t\geq 1$
\begin{equation}
    \left|\sum_{s=1}^t \tw(s) -N(t)\right|_{\infty}\leq 2K\sqrt{t}\,,\qquad N_a(t)\geq \frac{\sqrt{t}}{4K}-2K\ \forall a\in[1,K]\,.
    \label{eq:tracking_tw}
\end{equation}
\end{proposition}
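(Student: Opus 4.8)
\textbf{Proof proposal for Proposition~\ref{prop:tracking_tw}.}

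The plan is to establish the two inequalities in~\eqref{eq:tracking_tw} in sequence: first a bound on the gap between the cumulative sum of the \emph{skewed} weights $w'$ and the counts $N$, which follows from the generic analysis of cumulative tracking; then transfer this to the \emph{unskewed} weights $\tw$ by controlling the total forced exploration $\sum_s \gamma_s$; and finally derive the lower bound on $N_a(t)$ from the fact that the skewed weights are bounded below by $\gamma_t/K$ at every step.

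First I would analyze the tracking rule~\eqref{eq:sampling_rule_tracking}. The sampling rule $A_{t+1}\in\argmax_a \big(\sum_{s=1}^{t+1} w'_a(s)-N_a(t)\big)$ is exactly cumulative tracking of the sequence $(w'(s))_s$, so the standard tracking lemma (as in \citet{garivier2016optimal}) applies: if one always pulls the arm maximizing the discrepancy between the running sum of a sequence of simplex vectors and the current counts, then the coordinate-wise gap stays bounded. Concretely I would show by induction on $t$ that
\[
-1 \leq \sum_{s=1}^{t} w'_a(s) - N_a(t) \leq K-1 \qquad \forall a\in[1,K]\,,
\]
using that at each step exactly one count increments, that the increments $w'_a(s)$ lie in $[0,1]$ and sum to one across $a$, and that the arm chosen is the one currently most under-drawn relative to the cumulative weight. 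This yields $\big|\sum_{s=1}^t w'(s)-N(t)\big|_\infty\leq K$, a constant (hence trivially $\leq 2K\sqrt t$) bound.

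Next I would pass from $w'$ to $\tw$ using~\eqref{eq:sampling_rule_forced_exploration}, namely $w'(s)=(1-\gamma_s)\tw(s)+\gamma_s\pi$. Summing gives
\[
\sum_{s=1}^t \tw(s)-\sum_{s=1}^t w'(s)=\sum_{s=1}^t \gamma_s\big(\tw(s)-\pi\big)\,,
\]
and since $\tw(s),\pi\in\Sigma_K$ we have $|\tw(s)-\pi|_\infty\leq 1$, so the right-hand side is bounded coordinate-wise by $\sum_{s=1}^t \gamma_s=\tfrac14\sum_{s=1}^t s^{-1/2}\leq \tfrac12\sqrt t$. Combining with the previous display via the triangle inequality gives $\big|\sum_{s=1}^t \tw(s)-N(t)\big|_\infty\leq K+\tfrac12\sqrt t\leq 2K\sqrt t$ for all $t\geq 1$ (absorbing the constant into the $\sqrt t$ term), which is the first inequality.

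Finally, for the lower bound on $N_a(t)$, I would combine the count lower bound implicit in the tracking inequality with the forced exploration. Since $w'_a(s)\geq \gamma_s\pi_a=\gamma_s/K$, the cumulative skewed weight satisfies $\sum_{s=1}^t w'_a(s)\geq \tfrac1K\sum_{s=1}^t\gamma_s=\tfrac{1}{4K}\sum_{s=1}^t s^{-1/2}\geq \tfrac{\sqrt t}{4K}$ (using $\sum_{s=1}^t s^{-1/2}\geq \sqrt t$), and then the lower tracking bound $N_a(t)\geq \sum_{s=1}^t w'_a(s)-(K-1)\geq \tfrac{\sqrt t}{4K}-2K$ delivers the second inequality. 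The only delicate point—the main obstacle—is getting the tracking induction~\eqref{eq:sampling_rule_tracking} right with a time-varying sequence $w'(s)$ rather than a fixed target, in particular verifying that the constants in the $[-1,K-1]$ window are preserved by the argmax step; once that combinatorial bookkeeping is pinned down, the remaining steps are elementary estimates on $\sum_s s^{-1/2}$.
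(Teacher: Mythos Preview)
Your proposal is correct and follows essentially the same route as the paper: invoke the cumulative tracking lemma (Lemma~15 of \citet{garivier2016optimal}) to get $\big|\sum_s w'(s)-N(t)\big|_\infty\le K$, then transfer from $w'$ to $\tw$ via the skewing relation and a bound on $\sum_s\gamma_s$, and finally derive the count lower bound from $w'_a(s)\ge\gamma_s/K$. The only cosmetic difference is that you write $\tw(s)-w'(s)=\gamma_s(\tw(s)-\pi)$ directly, whereas the paper expands $\tw(s)=w'(s)+\tfrac{\gamma_s}{1-\gamma_s}\big(w'(s)-\pi\big)$ and picks up a harmless factor of~$2$; your version is in fact slightly tighter but otherwise identical in spirit.
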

Using Proposition~\ref{prop:regularity},\,~\ref{prop:regret_bound} and~\ref{prop:tracking_tw} one can proves that for $T\gtrsim 1/\epsilon^2$, on the event $\cE_\epsilon(T)$
\begin{equation*}
   F\big(w(T),\hmu(T)\big)\gtrsim F\big(\wstar(\mu),\mu\big)-\epsilon=T^\star(\mu)^{-1}-\epsilon\,.
\end{equation*}
Hence if we rewrite the stopping rule~\eqref{eq:stopping_rule}
\[
\frac{\beta\big(N(t),\delta\big)}{T} \leq F\big(w(T),\hmu(T)\big) \,,
\]
since $\beta\big(N(T),\delta\big)\sim \log(1/\delta)$ the algorithm will stop as soon as $T\gtrsim \log(1/\delta)/(T^\star(\mu)-\epsilon)$. Thus for such $T$ we have the inclusion $\cE_\epsilon(T) \subset \{\tau_\delta \leq T\}$. But thanks to the forced exploration, see Lemma~\ref{lem:deviation_E_epsilon}, we know that $\PP_\mu\!\big(\cE_\epsilon(T)\big) \lesssim e^{-C_{\epsilon} T^{1/16}}$. Therefore we obtain
\begin{align*}
    \EE_\mu[\tau_\delta]= \sum_{T=0}^{+\infty}\PP_\mu(\tau_\delta>T) &\lesssim  \frac{\log(1/\delta)}{T^\star(\mu)^{-1}-\epsilon}+1/\epsilon^2+\sum_{T=1}^{\infty} e^{-C_\epsilon T^{1/16}}\,.
\end{align*}
Thus dividing the above inequality by $\log(1/\delta)$  and  letting $\delta$ go to zero then $\epsilon$ go to zero allows us to conclude.

\section{Numerical Experiments}
\label{sec:experiments}
For the experiments we consider the Best Arm Identification problem described in Section~\ref{sec:BAI}. Precisely we restrict our attention to the simple, arbitrary, 4-armed bandit problem $\mu=[1,\, 0.85,\, 0.8,\, 0.75]$. The optimal proportion of draws is $\wstar(\mu)=[0.403,\,0.366,\,0.147,\, 0.083]$. The experiments compare several algorithms: the Lazy Mirror Ascent (LMA) described in Algorithm~\ref{alg:gradient_ascent}, the same algorithm but with a constant learning rate (LMAc), the Best Challenger (BC) algorithm given in Section~\ref{sec:BAI}, the Direct Tracking (DT) algorithm by \citet{garivier2016optimal}, Top Two Thompson Sampling (TTTS) by \citet{russo2016simple} and finally the uniform Sampling (Unif) as baseline. See Appendix~\ref{app:details_nume_exp} for details. Note in particular that all of them use the same Chernoff Stopping rule~\eqref{eq:stopping_rule} with the same threshold $\beta(t,\delta) = \log((\log(t)+1)/\delta)$ and the same decision rule \eqref{eq:decision_rule}. This allows a fair comparison between the sampling rules. Indeed it is known (see \citet{garivier2017thresholding}) that the choice of the stopping rule is decisive to minimize the expected number of sample. We only investigate here the effects of the sampling rule here because it is where the trade-off between uniform exploration and selective exploration takes place.
\begin{figure}[!ht]
\centering
\includegraphics[width=0.92\columnwidth]{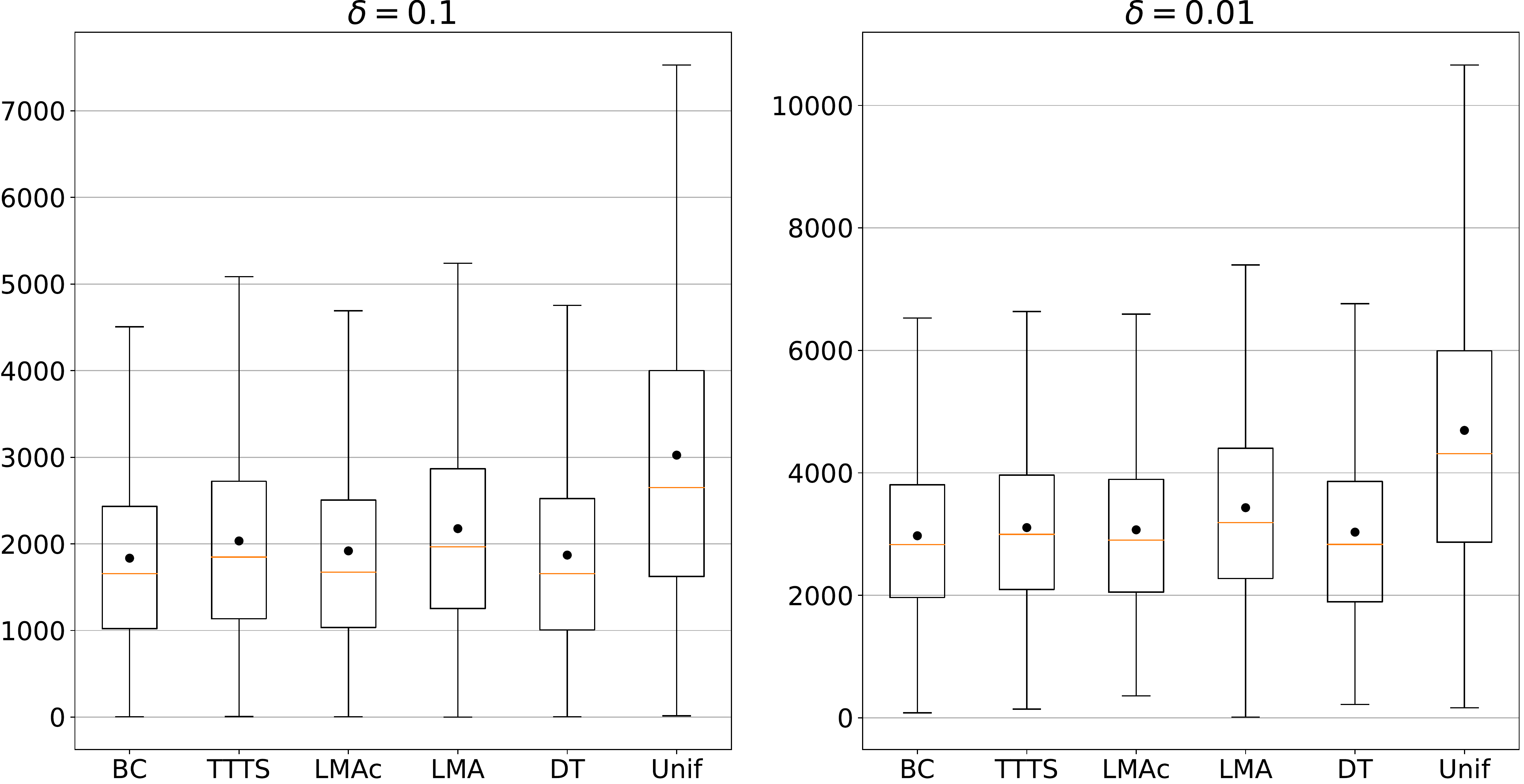}
\caption{Expected number of draws $\EE_\mu[\tau_\delta]$ (expectations are approximated over $1000$ runs) of various algorithms for the bandit problem $\mu=[1,\, 0.85,\, 0.8,\, 0.7]$. The black dots are the expected number of draws, the orange solid lines the medians.}
\label{fig:comp}
\vskip -0.2in
\end{figure}

\begin{table}[!ht]
    \centering
    \begin{tabular}{|c|c|c|c|c|c|c|}
        \hline
        Algorithm & BC & TTTS & LMAc & LMA & DT & Unif \\\hline
        Time (in second) & \num{5e-6} & \num{1e-4} &  \num{8e-6} & \num{8e-6} & \num{2e-3} & \num{4e-6} \\
        \hline
    \end{tabular}
    \caption{Average time (over 100 runs) of one step of various algorithms for the bandit problems $\mu=[1,\, 0.85,\, 0.8,\, 0.7]$.}
    \label{tab:execution_time}
\end{table}
Figure~\ref{fig:comp} displays the average number of draws of each aforementioned algorithms for two different confidence levels $\delta=0.1$ and $\delta=0.01$. The associated theoretical expected number of draws is respectively $T^\star(\mu)\log(1/\delta) \approx 1066$ for $\delta=0.1$ and $T^\star(\mu)\log(1/\delta) \approx 2133$ for $\delta= 0.01$.  Table~\ref{tab:execution_time} displays the average execution time of one step of these algorithms. Unsurprisingly all the algorithms perform better than the uniform sampling. LMA compares to the other algorithms but with slightly worse results. This may due to the fact that lazy mirror ascent (with a learning rate of order $1/\sqrt{t}$) is less aggressive than Frank Wolfe algorithm for example. Indeed using a constant learning rate (LMAc) we recover the same results as BC. But doing so we loose the guaranty of asymptotic optimality. The four mentioned algorithms share roughly the same (one step) execution time which is normal since they have the same complexity, see Appendix~\ref{app:details_nume_exp}. The Direct Tracking of the optimal proportion of draws performs slightly better than the other algorithms but the execution time is much longer (approximately 100 times longer) due to the extra cost of computing the optimal weights. Note that TTTS also tends to be slow when the posteriors are well concentrated, since it is then hard to sample the challenger. But it is the only algorithm that does not explicitly force the exploration.

\section{Conclusion}

In this paper we developed an unified approach to Bandit Active Exploration problems. In particular we provided a general, computationally efficient, asymptotically optimal algorithm. To avoid obfuscating technicalities, we treated only the case of Gaussian arms with known variance and unknown mean, but the results can easily be extended to other one-parameter exponential families. For this, we just need to replace the maximal inequality of Proposition~\ref{prop:max_ineq} by the one of Theorem 14 by \citet{kaufmann2018mixture} and to adapt the threshold accordingly.

Several questions remain open. It would be interesting to provide an analysis for the moderate-confidence regime as argued by \citet{simchowitz2017simulator}. An other way of improvement could be to explore further the connection with the Frank-Wolfe algorithm. 
Nevertheless the main open question, from the author point of view, is to find a natural way to explore instead of forcing the exploration. One possibility could be to use in this setting the principle of optimism. Because even for the Active Exploration problems there is trade-off between uniformly explore the distributions of the arms and selectively explore the distribution of specific arms to find in which set the bandit problem lies.

\bibliography{biblio-BLB}
\bibliographystyle{plainnat}

\appendix

\section{Examples}
\label{app:examples}
In this appendix we present some classical and less classical active exploration bandit problems that can be described by the general framework presented in Section~\ref{sec:problem_description}. Note that for all examples presented below Assumption~\ref{assp:bounded_gradient} holds. For the three first examples it is a direct consequence of the expression of the sub-gradient. For the last one just needs to remark that the projection $\lambda$ of a certain $\mu$ on an alternative set $\cS_i$ (for $i\neq i(\mu)$) is such that $\lambda_i$ belongs to the interval $[\min_{x\in\{\mu_1,\ldots,\mu_K,S} x,\, \max_{y\in\{\mu_1,\ldots,\mu_K,S\}} y]$ for all $i\in[K]$.

\subsection{Thresholding Bandits}
\label{sec:thresholding_bandit}
We fix a threshold $\thr \in \R$.
The objective here is to identify the set of arms $a$ above this threshold, $\{ a :\, \mu_a > \thr \}$. Therefore, to see this problem as a particular case of the one presented in Section~\ref{sec:problem_description} we choose $\cI=\cP\big([1,K]\big)$ the power set of $[1,K]$ and
\[
\cS_i=\big\{\mu'\in\cM:\ \{ a :\, \mu'_a > \thr \}=i\big\}\,.
\]
For $\mu\in\cS$, it turns out that there is an explicit expression for $F$ and the characteristic time in this particular case,
\begin{equation}
    \label{eq:F_threshold}
F(w,\mu) = \min_{a\in[1,K]} w_a \d(\mu_a, \thr) \quad T^\star(\mu)=\sum_{a=1}^K\frac{1}{\d(\mu_a,\thr)}\,.
\end{equation}
In the function $F$ we recognize the minimum of the costs (with respect to the weights $w$) for moving the mean of one arm to the threshold. Thanks to this rewriting the computation of the sub-gradient is direct
\[
\nabla F(w,\mu)=
\begin{bmatrix}
(0)\\
\d(\mu_a, \thr)\\
(0)
\end{bmatrix}\!,
\]
for $a$ that realize the minimum in~\eqref{eq:F_threshold} (the non-zero coordinate is at position $a$).
\subsection{Best Arm Identification}
\label{sec:BAI}
Here the objective is to identify the arm with the greatest mean. We set $\cI=[1,K]$ and
\[
\cS_i=\big\{\mu'\in\cM:\ \mu'_i>\mu'_a,\, \forall a\neq i\big\}\,.
\]
For $\mu\in \cS_i$, we can simplify a bit the expression of the characteristic time.  Indeed, using well chosen alternatives, see \citet{garivier2016optimal}, we have
\begin{equation}
    \label{eq:F_best_arm}
    F(w,\mu)=\min_{a\neq i}\, w_i \d(\mu_i,\bmu_{i,a}^w)+w_a \d(\mu_a,\bmu_{i,a}^w)\,,
\end{equation}
where $\bmu_{i,a}^w$ is the mean between the optimal mean $\mu_i$ and the mean $\mu_a$ with respect to the weights $w$:
\[
\bmu_{i,a}^w=\frac{w_i}{w_i+w_a}\mu_i+\frac{w_a}{w_i+w_a}\mu_a\,.
\]
We can see the weighted divergence that appears in \eqref{eq:def_F} as the cost for moving the mean of arm $a$ above the optimal one $\mu_i$ and thus make the arm $a$ optimal. Precisely we move at the same time $\mu_i$ and $\mu_a$ to the weighted mean $\bmu_{i,a}^w$. The computation of the sub-gradient is also straightforward in this case
\[
\nabla F(w,\mu)=
\begin{bmatrix}
(0)\\
\d(\mu_i,\bmu_{i,a}^w)\\
(0)\\
\d(\mu_a,\bmu_{i,a}^w)\\
(0)
\end{bmatrix}\!,
\]
for active coordinates $a\neq i$ that realize the minimum in~\eqref{eq:F_best_arm} (the non-zero coordinates are at positions $i$ and $a$).
A variant of the Best Challenger sampling rule introduced by \citet{garivier2016optimal}, see also \citet{russo2016simple}, is given by 
    \begin{align}
    & C_t \in \argmin_{a\in[1,K]/i_t} w_{i_t}(t) \d\big(\hmu_{i_t}(t),\bmu_{i_t,a}^{w(t)}(t)\big)+w_a(t) \d\big(\hmu_a(t),\bmu_{i_t,a}^{w(t)}(t)\big)\nonumber\\
    &A_{t+1} =\begin{cases}
    i_t &\text{ if } \d\big(\hmu_{i_t}(t),\bmu_{i_t,C_t}^{w(t)}(t)\big)>\d\big(\hmu_{C_t}(t),\bmu_{i_t,C_t}^{w(t)}(t)\!\big)\\
    C_t &\text{ else},
    \end{cases}\label{eq:bai_best_challenger}
\end{align}
where we denote by $i_t$ the current optimal arm (the one with the greatest mean) at time $t$. At a high level, we select the best challenger $C_t$ of the current best arm $i_t$ with respect to the cost that appear in \eqref{eq:F_best_arm}. Then we greedily choose between $C_t$ and $i_t$ the one that increases the most this cost. Again, as in the previous example, this sampling rule rewrites as one step of the Frank-Wolfe algorithm for the function $F\big(\cdot,\hmu(t)\big)$
\begin{align}
    e_{A_{t+1}}&\in\argmax_{w\in\Sigma_K} w\cdot \nabla F\big(w(t),\hmu(t)\big)\nonumber\\
    w(t+1) &= \frac{t}{t+1} w(t) + \frac{1}{t+1} e_{A_{t+1}}\label{eq:def_frank_wolfe_based}\,.
\end{align}

\subsection{Signed Bandits}
\label{sec:signed_bandits}
This is a variant of the Thresholding Bandits problem where we add the assumption that all the means lie above or under a certain threshold $\thr$. Thus we choose $\cI=\{+,\,-\}$ and 
\[
\cS_+=\{\mu' \in \cM:\ \mu'_a>\thr\}\quad \cS_-=\{\mu' \in \cM:\ \mu'_a<\thr\}\,.
\]
It is easy to see, for $\mu\in\cS$, that the function $F$ and the characteristic time reduce to
\begin{equation}
    \label{eq:signed_bandit} F(w,\mu)= \sum_{a=1}^K w_a\d(\mu_a,\thr) \quad T^\star(\mu)=\frac{1}{\max_{a\in[1,K]}\d(\mu_a,\thr)}\,.
\end{equation}
In the function $F$ we recognize the cost (with respect to the weights $w$) for moving all the means to the threshold $\thr$.  The sub-gradient of $F(\cdot,\mu)$ at $w$ is 
\[
\nabla F(w,\mu)=
\begin{bmatrix}
\d(\mu_1,\thr)\\
\vdots\\
\d(\mu_a, \thr)\\
\vdots\\
\d(\mu_K, \thr)
\end{bmatrix}\!.
\]
This example is interesting because if we follow a sampling rule based on the Frank-Wolfe algorithm, see \eqref{eq:def_frank_wolfe_based} (which is equivalent to track the optimal proportion of draws in this case), it would boil down to a kind of Follow the Leader sampling rule. And it is well known that it can fail to sample asymptotically according to the optimal proportion of draws which is in this case: 
\[
\wstar_a=\begin{cases} 1/L\text{ if }a \in \argmax_b\d(\mu_b,\thr)\\
0\text{ else}
\end{cases}\,,
\]
where $L$ is the number of arms that attain the maximum that appears in the definition of the characteristic time, see~\eqref{eq:signed_bandit}. This highlights the necessity to force in some way the exploration.

\subsection{Monotonous thresholding bandit}
\label{sec:monotonous_bandit}
 It is again a variant of the Thresholding Bandit problem with some additional structure. We fix a threshold $\thr$ and assume that sequence of means is increasing. The objective is to identify the arm with the closest mean to the threshold. Hence, we choose $\cI =[1,K]$ and 
 \[\cS_i=\{\mu'\in \cM:\ \mu_1<\ldots<\mu_K,\,|\mu_i-\thr|< |\mu_a-\thr|\ \forall a\neq i \}.
 \]
Unfortunately there is no explicit expressions for $F$ neither for the characteristic time in this problem. But it is possible to compute efficiently an element of the sub-gradient of $F$ using isotonic regressions, see~\citet{garivier2017thresholding}.

\section{Details on Numerical Experiments}
\label{app:details_nume_exp}

As stated in the Section~\ref{sec:experiments} we consider the Best Arm Identification problem (see Appendix~\ref{app:examples}) for $\mu = [1, 0.85, 0.8, 0.75
]$. For all the algorithms we used the same stopping rule~\eqref{eq:stopping_rule} with the threshold $\beta(t,\delta) = \log((\log(t)+1)/\delta)$ and decision rule~\eqref{eq:decision_rule}. We consider the following sampling rules:
 \begin{itemize}[label={-}]
     \item \textit{BC}: it is the sampling rule given by~\eqref{eq:bai_best_challenger} plus forced exploration as proposed by \citet{garivier2016optimal} (if the number of pulls of one arm is less than $\sim\sqrt{t}$ then this arm is automatically sampled). The complexity of one step is of order $O(K)$, see~\eqref{eq:bai_best_challenger}.
     \item \textit{TTTS}: it is basically the sampling rule of Top Tow Thompson Sampling by \citet{russo2016simple}. We use a Gaussian prior $\Ng(0,1)$ for each arms and we slightly alter the rule to choose between the best sampled arm $I$ and its re-sampled challenger $J$. Inspired by~\eqref{eq:bai_best_challenger}, if we denote by $\mu'$ the sample from the posterior where $I$ is optimal and by $\mu''$ the re-sample where $J$ is optimal, we choose arm $I$ if $\d(\mu'_I,\mu''_I)>\d(\mu'_J,\mu''_J)$, $J$ else. Here the complexity of one step is dominated by the sampling phase, in particular the sampling of the challenger, which can be costly if the posterior are concentrated. 
     \item \textit{LMA}: this is Algorithm~\ref{alg:gradient_ascent}. We do not try to optimize the parameters. We choose a learning rate of the form $\eta_t=1/(L\sqrt{t})$ where $L$ is of order the norm of the sub-gradients and the same exploration rate $\gamma_t$ as Theorem~\ref{th:asymptotic_optimality}. The complexity of one step is of order $O(K)$ (for computing the sub-gradient).
     \item \textit{LMAc}: Exactly the same as above but with a constant learning rate.
     \item \textit{DT}: this is the Direct Tracking (DT) algorithm by \citet{garivier2016optimal}, it basically tracks the optimal weights associated to the vector of empirical means plus some forced exploration (same as BC). For the Best Arm Identification problem, to compute the optimal weights, one needs to find the root of an increasing function, e.g. by the bisection method, whose evaluations requires the resolution of K scalar equations.  
     \item \textit{Unif}: the arm is selected at random.
 \end{itemize}

\section{Proof of Theorem~\ref{th:asymptotic_optimality}}
\label{app:proof_main_result}



Fix $\epsilon>0$ some real number and consider the typical event
\begin{equation}
    \cE_\epsilon(T)=\bigcap_{t\geq g(T)}^T\big\{\hmu(t)\in\cB_{\infty}(\mu,\kappa_\epsilon)\big\}\,.
\end{equation}
where $g(T):=\floor{T^{1/4}}
$, for some horizon $T$ such that $T\geq K$ and $2 g(T)\leq T$ ($T\geq 3$ is sufficient). We also impose $T$ to be greater than the smallest integer $T_M$ such that $M\sqrt{g(T_M)}\geq L$. This condition allows to get rid of the effects of clipping the gradient on $\cE_\epsilon(T)$.

Using Proposition~\ref{prop:regularity} we can replace the vector of empirical means $\hmu(t)$ by the true vector of means $\mu$ in the first sum of~\eqref{eq:regret_bound} at cost $\epsilon T$
\begin{align*}
    \sum_{t=g(T)}^T \Big|F\big(\wstar(\mu),\hmu(t)\big)-F\big(\wstar(\mu),\mu\big) \Big|&\leq \epsilon T \,,
\end{align*}
similarly, we can replace $\hmu(t)$ by $\hmu(T)$ in the second sum   
\begin{align*}
    \sum_{t=g(T)}^T \Big|F\big(\tw(t),\hmu(t)\big)-F\big(\tw(t),\hmu(T)\big) \Big|&\leq\epsilon T\,.
\end{align*}
Hence, we deduce from~\eqref{eq:regret_bound}, with $\tT=(T-g(T)+1)$, on the event $\cE_\epsilon(T)$
\begin{equation}
    \label{eq:regret_bound_2}
    \tT F\big(\wstar(\mu),\mu\big)-\sum_{t=g(T)}^T F\big(\tw(t),\hmu(T)\big)\leq C_0\sqrt{T}+2\epsilon T\,.
\end{equation}
Now we need to compare the sum in~\eqref{eq:regret_bound_2} with the quantity $\tT F\big(w(T),\hmu(T)\big)$. To this end we will use Proposition~\ref{prop:regret_bound}, which is a consequence of the tracking and the forced exploration, see~\eqref{eq:sampling_rule_tracking} and \eqref{eq:sampling_rule_forced_exploration}.
Thus, using the concavity of $F\big(\cdot,\hmu(t)\big)$ then Proposition~\ref{prop:regularity} we have 
\begin{align*}
\sum_{t=g(T)}^T F\big(\tw(t),\hmu(T)\big) &\leq \tT F\!\!\left(\frac{1}{\tT}\sum_{t=g(T)
}^T\tw(t),\hmu(T)\right)\\
&\leq \tT F\big(w(T),\hmu(T)\big)+\tT L K  \left|w(T) -\frac{1}{\tT} \sum_{t=g(T)
}^T\tw(t) \right|_{\infty}\\
\end{align*}
Before applying Proposition~\ref{prop:tracking_tw} we need to handle the fact that the sum in the last inequality above begins at $g(T)$. But it is not harmful because $g(T)$ is small enough, one can proves:
\begin{equation}
\label{eq:get_ride_of_gT}
\left|w(T) -\frac{1}{\tT} \sum_{t=g(T)
}^T\tw(t) \right|_{\infty} \!\!\!\leq \left|w(T) -\frac{1}{T} \sum_{t=1
}^T\tw(t) \right|_{\infty}\!\!\!\!+ \frac{2}{\sqrt{T}}.
\end{equation}
Indeed, using the triangular inequality we have 
\begin{align*}
    \left|w(T)- \frac{1}{\tT} \sum_{t=g(T)}^T\tw(t)\right|_{\infty}\leq  \left|w(T)- \frac{1}{T} \sum_{t=1}^T\tw(t)\right|_{\infty}+  \left|\frac{1}{T} \sum_{t=1}^T\tw(t)- \frac{1}{\tT} \sum_{t=g(T)}^T\tw(t)\right|_{\infty}\,.
\end{align*}
 It remains to notice that
  \begin{align*}
\left|\frac{1}{T} \sum_{t=1}^T\tw(t) -\frac{1}{\tT} \sum_{t=g(T)
}^T\tw(t) \right|_{\infty}&\leq \left|\frac{1}{T} \sum_{t=1}^T\tw(t) -\frac{1}{T} \sum_{t=g(T)
}^T\tw(t) \right|_{\infty}+ \left|\frac{1}{T} \sum_{t=g(T)}^T\tw(t) -\frac{1}{\tT} \sum_{t=g(T)
}^T\tw(t) \right|_{\infty}\\
&\leq \frac{g(T)}{T}+ \left(\frac{1}{\tT}-\frac{1}{T}\right)\tT\leq 2\frac{g(T)}{T}\\
&\leq \frac{2}{\sqrt{T}}\,,
\end{align*}
where in the last line we used $g(T)\leq \sqrt{T}$, by definition.
Now, using~\eqref{eq:get_ride_of_gT} then~\eqref{eq:tracking_tw} we obtain 
\[
\sum_{t=K}^T F\big(\tw(t),\hmu(T)\big) \leq \tT F\big(w(T),\hmu(T)\big)+\tT \frac{4 L K^2 }{\sqrt{T}}\,.
\]
Thus, using the above inequality in~\eqref{eq:regret_bound_2} and dividing by $\tT$ we get  
\begin{align*}
    F\big(\wstar(\mu),\mu\big)-\!F\big(w(T),\hmu(T)\big)\!&\leq \frac{C_0 \sqrt{T}}{\tT}+ \frac{2\epsilon T}{\tT}+\frac{4 L K^2}{\sqrt{T}}\\
    &\leq \underbrace{(2C_0\! +\! 4 K^2 L\big)}_{:=C_1}\frac{1}{\sqrt{T}}+4\epsilon,
\end{align*}
where in the last line we used $T/\tT\leq 2$, thanks to the choice of $T$. For $T\geq (C_1/\epsilon)^2$, we finally obtain the bound announced in Section~\ref{sec:gradient_ascent_proof}
\begin{equation}
   F\big(w(T),\hmu(T)\big)\geq F\big(\wstar(\mu),\mu\big)-5\epsilon=T^\star(\mu)^{-1}-5\epsilon\,.
\end{equation}
Hence the algorithm will stop at $T$ if $\beta\big(N(T),\delta\big)/T\leq T^\star(\mu)^{-1}-5\epsilon $. We use the following technical lemma (proved in Appendix~\ref{app:other_proofs}) to characterize such $T$.
\begin{lemma}
There exits a constant $C_3(\epsilon)$ that depends on $\epsilon$ and $K$, such that for 
\[
T\geq \frac{\log(1/\delta)+K\log\!\big(4\log(1/\delta)+1\big)}{T^\star(\mu)^{-1}-6\epsilon} +C_3(\epsilon)\,,
\]
it holds 
\[
\beta\big(N(T),\delta\big)/T\leq  T^\star(\mu)^{-1}-5\epsilon\,.
\]
\label{lem:invers_log_log}
\end{lemma}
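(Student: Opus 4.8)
The plan is to separate the threshold $\beta(N(T),\delta)$ into its leading, $\delta$-dependent part and a slowly growing remainder, and then to absorb the remainder into the gap between $T^\star(\mu)^{-1}-6\epsilon$ and $T^\star(\mu)^{-1}-5\epsilon$. Concretely I would write, using \eqref{eq:def_beta},
\[
\beta\big(N(T),\delta\big)=\underbrace{\log(1/\delta)+K\log\!\big(4\log(1/\delta)+1\big)}_{=:A(\delta)}+\underbrace{6\sum_{a=1}^K\log\!\big(\log N_a(T)+3\big)+K\widetilde{C}}_{=:R(T)}\,.
\]
Since each arm is sampled during the initialisation we have $1\le N_a(T)\le T$ for every $a$, and $x\mapsto\log(\log x+3)$ is nondecreasing on $[1,\infty)$; hence $R(T)\le 6K\log(\log T+3)+K\widetilde{C}$, a quantity that no longer depends on $\mu$ or $\delta$.

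Because $\log\log T=o(T)$, the map $T\mapsto \epsilon T-6K\log(\log T+3)-K\widetilde{C}$ is nonnegative for all $T$ beyond some threshold (it tends to $+\infty$ and is eventually increasing, so the set where it is negative is bounded). I would let $C_3(\epsilon)$ be the smallest integer with this property. By construction $C_3(\epsilon)$ depends only on $\epsilon$, $K$ and the universal constant $\widetilde{C}$, exactly as claimed, and $R(T)\le\epsilon T$ whenever $T\ge C_3(\epsilon)$.

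It then remains to plug in. In the regime of interest $\epsilon$ is small enough that $T^\star(\mu)^{-1}-6\epsilon>0$ (otherwise the fraction in the statement is meaningless), so a $T$ satisfying the hypothesis of the lemma satisfies both $T\ge A(\delta)/(T^\star(\mu)^{-1}-6\epsilon)$, i.e.\ $A(\delta)\le T\big(T^\star(\mu)^{-1}-6\epsilon\big)$, and $T\ge C_3(\epsilon)$, i.e.\ $R(T)\le\epsilon T$. Adding the two bounds,
\[
\beta\big(N(T),\delta\big)\le A(\delta)+R(T)\le T\big(T^\star(\mu)^{-1}-6\epsilon\big)+\epsilon T=T\big(T^\star(\mu)^{-1}-5\epsilon\big)\,,
\]
which is the asserted inequality $\beta(N(T),\delta)/T\le T^\star(\mu)^{-1}-5\epsilon$.

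The argument is essentially bookkeeping, and I do not expect a genuine obstacle. The one point that deserves an explicit line is that $C_3(\epsilon)$ carries no hidden dependence on $\mu$ or $\delta$; this is transparent from the explicit form of $R(T)$ once the crude bound $N_a(T)\le T$ has been used, and it is what makes the remainder term uniformly negligible compared with the linear growth of $T$.
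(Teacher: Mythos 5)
Your proof is correct and follows essentially the same route as the paper's: bound each $N_a(T)$ by $T$, choose $C_3(\epsilon)$ so that $6K\log(\log T+3)+K\widetilde{C}\leq\epsilon T$ for $T\geq C_3(\epsilon)$, and absorb this remainder into the gap between $6\epsilon$ and $5\epsilon$. No substantive difference.
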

We also need to use that $\cE_\epsilon(T)$ is a typical event. Quantitatively, using the consequence of the forced exploration~\eqref{eq:tracking_tw}, we can prove the following deviation inequality, see Appendix~\ref{app:other_proofs}.
\begin{lemma}
There exists two constants $C_4(\epsilon)$ and $C_5(\epsilon)$, that depend on $\epsilon$, $\mu$ and $K$, such that
\[
\PP_\mu\big(\cE_\epsilon(T)^{c}\big) \leq C_5(\epsilon) T e^{-C_4(\epsilon) T^{1/8}}\,.
\]
\label{lem:deviation_E_epsilon}
\end{lemma}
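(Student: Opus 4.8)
The complement event unfolds as a union over the time window and the arms,
\[
\cE_\epsilon(T)^{c}=\bigcup_{t=g(T)}^{T}\big\{\hmu(t)\notin\cB_{\infty}(\mu,\kappa_\epsilon)\big\}\subseteq\bigcup_{t=g(T)}^{T}\bigcup_{a=1}^{K}\big\{|\hmu_a(t)-\mu_a|\geq\kappa_\epsilon\big\}\,,
\]
so the plan is to establish a single-coordinate deviation bound $\PP_\mu(|\hmu_a(t)-\mu_a|\geq\kappa_\epsilon)\leq C''(\epsilon)e^{-C_4(\epsilon)T^{1/8}}$ that is uniform over $t\geq g(T)$, and then take a union bound over the at most $T$ values of $t$ and the $K$ arms; the resulting factor $KT$ will supply the prefactor $C_5(\epsilon)T$.

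The crux, and the place where forced exploration is indispensable, is that the sample size $N_a(t)$ driving $\hmu_a(t)$ is itself data dependent, so a plain Gaussian tail bound does not apply directly. First I would invoke the deterministic lower bound from Proposition~\ref{prop:tracking_tw}, namely $N_a(t)\geq n_0(t):=\sqrt{t}/(4K)-2K$ for all $t$ and all $a$, which holds almost surely by virtue of \eqref{eq:sampling_rule_forced_exploration} and \eqref{eq:sampling_rule_tracking}. To convert the random sample size into a fixed one, I would represent the rewards of each arm by an i.i.d.\ table $(Z_{a,k})_{k\geq1}$ of $\Ng(\mu_a,1)$ variables, so that $\hmu_a(t)=\frac{1}{N_a(t)}\sum_{k=1}^{N_a(t)}Z_{a,k}$ and, for every fixed $n$, the prefix sum $\sum_{k=1}^{n}Z_{a,k}$ is a genuine sum of $n$ i.i.d.\ Gaussians. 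On the almost sure event $\{N_a(t)\geq n_0(t)\}$ a deviation of $\hmu_a(t)$ forces a deviation of some prefix average of length $n\geq n_0(t)$, whence
\[
\PP_\mu\!\big(|\hmu_a(t)-\mu_a|\geq\kappa_\epsilon\big)\leq\sum_{n\geq n_0(t)}\PP\!\left(\left|\frac{1}{n}\sum_{k=1}^{n}Z_{a,k}-\mu_a\right|\geq\kappa_\epsilon\right)\leq\sum_{n\geq n_0(t)}2e^{-n\kappa_\epsilon^2/2}\,,
\]
using the standard Gaussian tail estimate. The geometric series sums to $C'(\epsilon)e^{-n_0(t)\kappa_\epsilon^2/2}$ with $C'(\epsilon)=2/(1-e^{-\kappa_\epsilon^2/2})$.

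It remains to feed in the value of $g(T)$. For $t\geq g(T)=\floor{T^{1/4}}$ one has $\sqrt{t}\geq\sqrt{g(T)}\geq T^{1/8}/\sqrt{2}$ once $T$ is large enough (e.g.\ $g(T)\geq T^{1/4}/2$ for $T\geq16$), so that $n_0(t)\geq T^{1/8}/(4\sqrt{2}\,K)-2K$. Substituting this back, the $-2K$ shift is absorbed into a constant and yields, uniformly in $t\geq g(T)$,
\[
\PP_\mu\!\big(|\hmu_a(t)-\mu_a|\geq\kappa_\epsilon\big)\leq C''(\epsilon)\,e^{-C_4(\epsilon)T^{1/8}}\,,\qquad C_4(\epsilon)=\frac{\kappa_\epsilon^2}{8\sqrt{2}\,K}\,,\quad C''(\epsilon)=C'(\epsilon)e^{K\kappa_\epsilon^2}\,.
\]
Summing over the $K$ arms and the at most $T$ time indices gives $\PP_\mu(\cE_\epsilon(T)^{c})\leq KC''(\epsilon)\,T\,e^{-C_4(\epsilon)T^{1/8}}$, which is precisely the claim with $C_5(\epsilon):=KC''(\epsilon)$. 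For the finitely many small values of $T$ for which $n_0(t)$ may fail to be positive, the inequality holds trivially after enlarging $C_5(\epsilon)$ so that the right-hand side exceeds one. The main obstacle throughout is the single conversion step above: everything hinges on the deterministic forced-exploration lower bound on $N_a(t)$, which alone lets the i.i.d.-table representation and the union-over-sample-size estimate turn the data-dependent count into a clean fixed-$n$ Gaussian tail.
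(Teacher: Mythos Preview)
Your proposal is correct and follows essentially the same route as the paper: union bound over $t\in[g(T),T]$ and over the $K$ arms, the deterministic forced-exploration lower bound $N_a(t)\geq\sqrt{t}/(4K)-2K$ from Proposition~\ref{prop:tracking_tw}, the i.i.d.-table representation to reduce the random sample size to a union over fixed sample sizes $n\geq n_0(t)$, a Gaussian (Chernoff) tail bound, and summation of the resulting geometric series. The paper's proof differs only in bookkeeping: it keeps the upper limit $n\leq t$ in the union over sample sizes (harmless, since you drop it), tracks slightly different explicit constants $C_4(\epsilon)=\kappa_\epsilon^2/(16K)$ and $C_5(\epsilon)=\tfrac{4K}{\kappa_\epsilon^2}e^{(2K+1)\kappa_\epsilon^2/2}$, and passes from $\sqrt{g(T)}$ to $T^{1/8}$ with a cruder inequality rather than your $g(T)\geq T^{1/4}/2$ argument.
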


Putting all together, for $T$ large enough, for example:
\begin{align*}
T\geq & \frac{\log(1/\delta)+K\log\!\big(4\log(1/\delta)+1\big)}{T^\star(\mu)^{-1}-6\epsilon} +C_3(\epsilon)+ (C_1/\epsilon)^2 +K+ T_M +3\,,
\end{align*}
we have the inclusion $\cE_\epsilon(T) \subset \{\tau_\delta \leq T\}$, hence using Lemma~\ref{lem:deviation_E_epsilon}
\[
\PP_\mu(\tau_\delta > T)\leq \PP\big( \cE_{\epsilon}(T)^c \big)\leq C_5(\epsilon) T e^{-C_4(\epsilon) T^{1/8}}\,.
\]
It remains to remark that, using the above inequalities, 
\begin{align}
    \EE_\mu[\tau_\delta]= \sum_{T=0}^{+\infty}\PP_\mu(\tau_\delta>T) &\leq  \frac{\log(1/\delta)+K\log\!\big(4\log(1/\delta)+1\big)}{T^\star(\mu)^{-1}-6\epsilon}+C_3(\epsilon)+ (C_1/\epsilon )^2 +K\nonumber\\
    &+T_M +3+\sum_{T=1}^{\infty}C_5(\epsilon) T e^{-C_4(\epsilon) T^{1/8}} \!.\label{eq:presque}
\end{align}
Thus dividing~\eqref{eq:presque} by $\log(1/\delta)$  and  letting $\delta$ go to zero, we obtain 
\[
\limsup_{\delta \rightarrow 0}\frac{\EE_\mu[\tau_\delta]}{\log(1/\delta)}\leq \frac{1}{T^\star(\mu)^{-1}-6\epsilon}\,,
\]
letting $\epsilon$ go to zero allows us to conclude.

\section{Deviations Inequality}
\label{app:deviations}
Let $\theta$ be a certain parameter in $\R^d$. We consider the linear model 
\[
X_t= \theta\cdot A_t+\eta_t\,,
\]
where $\{\eta_t\}_{t\in\N^\star}$ are i.i.d. from a Gaussian distribution $\Ng(0,1)$ and $A_t\in\R^d$ is a random variable $\sigma(A_1, X_1,\ldots,A_{t-1}, X_{t-1})$-measurable. 
Let $V_t:=\sum_{t=1}^t A_s A_s^\top$ be the Gram matrix and $\htheta_t$ be the least square estimator of $\theta$ (defined when $V_t$ is invertible)
\[
\htheta_t=V_t^{-1}\sum_{s=1}^t A_s X_s\,.
\]
We assume that $A_s=e_s$ the $s$-nth vector of the canonical basis of $\R^d$ for $1 \leq s \leq d$, such that $V_t$ is invertible for $t\geq d$. We want to prove a maximal inequality on the self-normalized following quantity 
\begin{equation}
    \label{eq:def_S_t_V_t}
    \frac{|\htheta_t-\theta|^2_{V_t}}{2}=\frac{|S_t|^2_{V_t^{-1}}}{2}\,,
\end{equation}
where $S_t:=\sum_{s=1}^t A_s \eta_s$ and $|x|_V:= x^{\top} V x$ is the norm induced by the symmetric positive definite matrix $V$.
In addition we will assume that for all $t\geq1$ the random variable $A_t\in (e_l)_{l\in[1,d]}$ is an element of the canonical basis. Thus the Gram matrix $V_t$ is diagonal and for all $l\in [1,d]$
\[
V_{t,l,l}= N_{t,l}:=\sum_{s=1}^t \ind_{\{A_s = e_l\}}\,.
\]
\begin{proposition}
For $\delta>0$ and $1>\beta>0$,
\begin{equation}
    \PP\left(\exists n\geq t\geq d,\, |S_t|_{V_t^{-1}}^2/2 \geq \log(1/\delta)+(1+\beta)d\loglog(n)+o_{\delta,\beta}\big(\loglog(n)\big)\right)\leq \delta\,,
    \label{eq:max_ineq_diag_n}
\end{equation}
see the end of the proof for an explicit formula. And if we do not care about the constant in front of the term in $\loglog(n)$, it holds
\begin{equation}
    \PP\left(\exists t\geq d,\, |S_t|_{V_t^{-1}}^2/2 \geq \log(1/\delta)+6\sum_{l=1}^d \log\!\big(\log(N_{t,l})+3\big)+d\widetilde{C}\right)\leq \delta\,,
    \label{eq:max_ineq_diag_N}
\end{equation}
see the end of the proof for an explicit expression of the constant $\widetilde{C}$.
\label{prop:max_ineq_diag}
\end{proposition}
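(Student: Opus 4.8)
Proposition~\ref{prop:max_ineq_diag}: a maximal inequality on the self-normalized quantity $|S_t|_{V_t^{-1}}^2/2$ in a diagonal linear model, both with a sharp constant in front of the $\loglog(n)$ term (valid up to a finite horizon $n$) and with a looser but fully anytime bound of the form $\log(1/\delta)+6\sum_l\log(\log(N_{t,l})+3)+d\widetilde C$.

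The plan is to pass from the $d$-dimensional self-normalized statistic to a product of one-dimensional exponential martingales (one per coordinate), to mix each of them over its drift parameter, and to conclude with Ville's maximal inequality. First, because $A_t\in(e_l)_{l\in[1,d]}$ for every $t$ and $A_s=e_s$ for $1\le s\le d$, the Gram matrix is diagonal: $V_t=\mathrm{diag}(N_{t,1},\dots,N_{t,d})$ for $t\ge d$, and, writing $S_{t,l}=\sum_{s\le t}\ind_{\{A_s=e_l\}}\eta_s$, one has $|S_t|_{V_t^{-1}}^2/2=\sum_{l=1}^d S_{t,l}^2/(2N_{t,l})$. So it suffices to control this sum uniformly over $t\ge d$ (resp.\ over $d\le t\le n$ for \eqref{eq:max_ineq_diag_n}). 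For $\lambda=(\lambda_1,\dots,\lambda_d)\in\R^d$, set $M_t^\lambda=\exp\!\big(\lambda\cdot S_t-\tfrac12\sum_{l}\lambda_l^2 N_{t,l}\big)$. If $A_t=e_l$ then $M_t^\lambda/M_{t-1}^\lambda=\exp(\lambda_l\eta_t-\lambda_l^2/2)$, and since $A_t$ is $\cF_{t-1}$-measurable while $\eta_t\sim\Ng(0,1)$ is independent of $\cF_{t-1}$, this increment has conditional mean $1$; hence $(M_t^\lambda)_{t\ge0}$ is a nonnegative $(\cF_t)$-martingale of mean $1$.

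For a product density $q(\lambda)=\prod_{l}q_l(\lambda_l)$, Fubini's theorem shows that $\bar M_t^q:=\int_{\R^d}M_t^\lambda\,q(\lambda)\diff\lambda$ is again a nonnegative martingale of mean $1$, which factorises as $\bar M_t^q=\prod_{l=1}^d\bar m_{t,l}$ with $\bar m_{t,l}=\int_\R \exp(\lambda S_{t,l}-\lambda^2 N_{t,l}/2)\,q_l(\lambda)\diff\lambda$ a function of $(S_{t,l},N_{t,l})$ only. By Ville's maximal inequality, $\PP(\exists t\ge d:\bar M_t^q\ge 1/\delta)\le\delta$; on the complement $\sum_{l}\log\bar m_{t,l}<\log(1/\delta)$ for all $t\ge d$. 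It then remains to choose each $q_l$ so that, uniformly over $N\ge 1$ and $S\in\R$,
\[
\log\!\int_\R e^{\lambda S-\lambda^2 N/2}q_l(\lambda)\diff\lambda\ \ge\ \frac{S^2}{2N}-6\log\!\big(\log N+3\big)-\widetilde C .
\]
This is a Laplace-type estimate: the integrand peaks at $\lambda^\star=S/N$ with value $e^{S^2/(2N)}$ and curvature $N$, so the integral is of order $N^{-1/2}q_l(S/N)e^{S^2/(2N)}$, and one takes $q_l$ with appropriately heavy (two-sided) tails — the prior of \citet{kaufmann2018mixture} — so that $N^{-1/2}q_l(S/N)\ge e^{-6\log(\log N+3)-\widetilde C}$ for all $N,S$. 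Plugging this into $\sum_l\log\bar m_{t,l}<\log(1/\delta)$ gives exactly \eqref{eq:max_ineq_diag_N}, with $\widetilde C$ the universal constant emerging from that one-dimensional computation.

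To obtain instead the optimal leading factor $(1+\beta)\,d\,\loglog(n)$ in \eqref{eq:max_ineq_diag_n}, one replaces the single mixture by a peeling over geometric scales. Fix the ratio $1+\beta$; for each coordinate split according to $N_{t,l}\in[(1+\beta)^{k},(1+\beta)^{k+1})$ with $0\le k\le K_n=\Theta(\log n)$, and inside the product martingale use a per-coordinate prior that is itself a mixture over these scales with weights $\propto(k+1)^{-(1+\beta)}$, tuning the drift on scale $k$ to be of order $(1+\beta)^{-k/2}$. Summing the $d$ per-coordinate contributions and optimising over the grid produces the term $(1+\beta)\,d\,\loglog(n)$ together with the lower-order remainder $o_{\delta,\beta}(\loglog n)$, whose explicit form falls out of the optimisation.

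The only genuinely delicate point is the pair of estimates above: getting the dependence on $d$ as a \emph{sum} $\sum_l\log(\log N_{t,l}+3)$ — not $d$ copies of a worst case, and without the extra $\log d$ that a union bound over coordinates would cost — while remaining \emph{uniform over all} $t\ge d$ and over the unknown, possibly very unbalanced counts $N_{t,l}$. The multiplicative combination of the per-coordinate martingales is precisely what turns the per-coordinate prices into an additive sum; the remaining work, the scale-robust one-dimensional mixture estimate, is the technical heart and follows the mixture-martingale machinery of \citet{kaufmann2018mixture}.
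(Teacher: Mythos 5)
Your overall architecture — product-form exponential martingale, mixture over a product of heavy-tailed one-dimensional priors, factorisation into per-coordinate integrals, Ville's maximal inequality — is exactly the paper's (the paper uses the prior $f(\lambda)=C_\beta\,|\lambda|^{-1}\big(\big|\log|\lambda|\big|+2\big)^{-(1+\beta)}$ and a Laplace-type lower bound on each coordinate's integral obtained by restricting to the window $S_{t,l}/N_{t,l}\pm\sqrt{2/N_{t,l}}$). However, there is a genuine gap at the step you yourself flag as the technical heart: the claimed uniform estimate
\[
N^{-1/2}\,q_l(S/N)\ \ge\ e^{-6\log(\log N+3)-\widetilde C}\qquad\text{for all }N\ge1,\ S\in\R
\]
is impossible for \emph{any} probability density $q_l$. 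For fixed $N$ it would force $q_l(x)\ge \sqrt{N}\,(\log N+3)^{-6}e^{-\widetilde C}>0$ for every $x\in\R$, contradicting integrability. The Laplace estimate unavoidably leaves residual terms depending on the (unbounded) quantity $|S_{t,l}|/\sqrt{N_{t,l}}$ — in the paper these appear as $\log\big(|S_{t,l}|/\sqrt{2N_{t,l}}+1\big)$ and a double-logarithmic term in $S_{t,l}/N_{t,l}$ — and no choice of prior removes them.

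The paper's resolution, which your proposal is missing, is a \emph{bootstrap}: first, the $S$-dependent residuals are absorbed via concavity of the logarithm, $\sum_{l}2\log\big(|S_{t,l}|/\sqrt{2N_{t,l}}+1\big)\le |S_t|^2_{V_t^{-1}}/4+2d\log 2$, yielding a crude high-probability bound $|S_t|^2_{V_t^{-1}}/2\lesssim 4\log\big(\prod_l(\log N_{t,l}+3)\,C/\delta\big)$; this crude bound is then substituted back into the refined mixture inequality (on the intersection of the two events, each taken at level $\delta/2$) to eliminate the $S_{t,l}$-dependence. This is also why the final anytime bound carries the extra additive term $d\log\big(4\log(1/\delta)+1\big)$ that appears in the threshold $\beta(N(t),\delta)$ of Proposition~\ref{prop:max_ineq}: it is the price of the bootstrap, and a bound of the form you state, with no $\delta$-dependent inflation beyond $\log(1/\delta)$, is not what the argument delivers. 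Separately, for \eqref{eq:max_ineq_diag_n} the paper does not peel over geometric scales: it simply specialises the bootstrapped inequality using $N_{t,l}\le n$ for $t\le n$, the factor $(1+\beta)$ coming directly from the tail exponent of the prior; your peeling construction is plausible but unnecessary and would need to be carried out in detail to count as a proof.
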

Proposition~\ref{prop:max_ineq} is a simple rewriting of \eqref{eq:max_ineq_diag_N} for $d=K$. Indeed, the Kullback-Leibler divergence in \eqref{eq:def_S_t_V_t} rewrites with the diagonal assumption on the Gram matrix
\begin{equation}
    \label{eq:S_t_V_t_in_chernoff}
    \frac{|S_t|^2_{V_t^{-1}}}{2}=\sum_{l=1}^d N_{t,l}\d(\htheta_{t,l},\theta_l)\,.
\end{equation} 
The constant in front of the $\loglog(n)$ in~\eqref{eq:max_ineq_diag_n} is optimal when $\beta$ goes to $0$ with respect to the Law of the Iterated Logarithm, for the particular case of uniform sampling, i.e. $A_t= t \mod K$, see Lemma~2 of \citet{finkelstein1971law}. The proof of Proposition~\ref{prop:max_ineq_diag} is a variation on the method of mixtures, see \citet{pena2008self} for an introduction to the method, \citet{lattimore2018bandit} and \citet{abbasi2011improved} for the use of this methods in the bandit setting. It turns out that the prior used is really close to the one used by \citet{balsubramani2014sharp} in their proof of Lemma~12.

\begin{proof}[Proof Proposition~\ref{prop:max_ineq_diag}]
 We will use the method of mixtures with the prior on $\R^d$
\[
\tf(\lambda)= \prod_{l=1}^d f(\lambda_l)\,,
\]
with $f$ a density on $\R$ given by 
\[
f(\lambda)=\frac{C_\beta}{|\lambda|\Big(\big|\log|\lambda|\big|+2\Big)^{1+\beta}}\,,
\]
where $C_\beta$ is the normalizing constant. Hence, we consider the martingale
\[
M_t=\int e^{\lambda\cdot S_t -|\lambda|^2_{V_t}/2} \tf(\lambda)\diff{\lambda}\,.
\]
We can rewrite this martingale to make appear the quantity of interest
\[
M_t= e^{|S_t|_{V_t^{-1}}^2/2}\prod_{l=1}^d \int e^{-(S_{t,l}/N_{t,l}-\lambda_l)^2 N_{t,l}/2}f(\lambda_l)\diff{\lambda}\,.
\]
Using that $f$ is symmetric and non-increasing on $\R^+$, we can lower bound the martingale as follows
\begin{align*}
    M_t  &\geq e^{|S_t|_{V_t^{-1}}^2/2}\prod_{l=1}^d \int_{S_{t,a}/N_{t,a}-\sqrt{2/N_{t,a}}}^{S_{t,a}/N_{t,a}+\sqrt{2/N_{t,a}}} e^{-(S_{t,l}/N_{t,l}-\lambda_l)^2 N_{t,l}/2}f(\lambda_l)\diff{\lambda}\\
    &\geq e^{|S_t|_{V_t^{-1}}^2/2}\prod_{l=1}^d \frac{2C_\beta e^{-1}}{\big(|S_{t,a}|/\sqrt{2 N_{t,a}} +1\big) \left(\Big|\log\big(|S_{t,a}|/N_{t,a})+\sqrt{2/N_{t,a}}\big)\Big|+2 \right)^{1+\beta}}\,.
\end{align*}

Thanks to the method of mixtures this lower bound leads to the following maximal inequality
\begin{align}
    \PP \Bigg( \exists t\geq d,\, &|S_t|_{V_t^{-1}}^2/2 \geq \log(1/\delta)+\sum_{l=1}^d \log\big(|S_{t,a}|/\sqrt{2 N_{t,a}} + 1\big)+\nonumber\\
    &(1+\beta)\sum_{l=1}^d \log\!\left(\Big|\log\big(|S_{t,a}|/N_{t,a}+\sqrt{2/N_{t,a}}\big)\Big|+2 \right)+ d\Big(1+\log\big(1/(2C_\beta)\big)\Big)\Bigg)\leq \delta\label{ineq_raw_diag}
\end{align}
We can simplify a bit the expression in \eqref{ineq_raw_diag} using that
\begin{align*}
    \log\!\left(\Big|\log\big(|S_{t,a}|/N_{t,a})+\sqrt{2/N_{t,a}}\big)\Big|+2 \right)&\leq \log\!\left(\big|\log(\sqrt{N_{t,a}/2})\big|+2+\log\big(|S_{t,a}|/\sqrt{2 N_{t,a}}+1\big) \right)\\
    &\leq \log\big(\log(N_{t,a})+3\big)+\frac{\log\big(|S_{t,a}|/\sqrt{2 N_{t,a}}+1\big)}{2}\,,
\end{align*}
where we used in the last line the fact that $\log(x+y)\leq \log(x)+y/x$ for $x,y >0$. Indeed, injecting this inequality in \eqref{ineq_raw_diag} we obtain 
\begin{align}
    \PP \Bigg( \exists t\geq d,\, |S_t|_{V_t^{-1}}^2/2 \geq \log(1/\delta)&+\sum_{l=1}^d 2\log\big(|S_{t,l}|/\sqrt{2 N_{t,l}} + 1\big)+\nonumber\\
    &(1+\beta)\sum_{l=1}^d \log\!\big(\log(N_{t,l})+3\big)+ d\Big(1+\log\big(1/(2C_\beta)\big)\Big)\Bigg)\leq \delta\label{ineq_refined_diag}
\end{align}
Now we will bootstrap this inequality to get rid of the $|S_{t,l}|/\sqrt{2N_{t,l}}$ inside the $\log$. Noting that by concavity of the logarithm and $\log(x+1)\leq x/2+\log(2)$ for $x>0$
\begin{align*}
    \sum_{l=1}^d 2\log\big(|S_{t,l}|/\sqrt{2 N_{t,l}} + 1\big)&\leq \sum_{l=1}^d \log\big(|S_{t,l}|^2/(2 N_{t,l}) + 1\big)+d\log(2)\\
    &\leq d \log\big(|S_t|_{V_t^{-1}}^2/(2 d) + 1\big)+d\log(2)\\
    & \leq |S_t|_{V_t^{-1}}^2/4+2d\log(2)
\end{align*}
we can degrade \eqref{ineq_refined_diag}, with the choice $\beta=0.5$, to 
\begin{align*}
    \PP \Bigg( \exists t\geq d,\, |S_t|_{V_t^{-1}}^2/4 \geq \log(1/\delta)+
   2\sum_{l=1}^d \log\!\big(\log(N_{t,l})+3\big)+ d\big(1+2\log(1/C_{1/2})\big)\Bigg)\leq \delta
\end{align*}
This last inequality implies the following one 
\begin{equation}
    \label{ineq_bootstrap_diag}
       \PP \Bigg( \exists t\geq d,\, |S_t|_{V_t^{-1}}^2/2 \geq 4 \log\!\left(\frac{\prod_{l=1}^d \big(\log(N_{t,l})+3\big) C}{\delta} \right)\Bigg)\leq \delta\,,
\end{equation}
where $C$ is a constant such that $\log(C)=1-2\log(C_{1/2})$. Let $A$ be the event that appears in~\eqref{ineq_refined_diag} with $\delta/2$ instead of $\delta$, $B$ be the event that appears in~\eqref{ineq_bootstrap_diag} with $\delta/2$ instead of $\delta$ and $D$ be such that 
\begin{align*}
    D:=\bigg\{\exists t\geq d,\ |S_t|_{V_t^{-1}}^2/2 &\geq \log(2/\delta)+\sum_{l=1}^d 2\log\big(|S_{t,l}|/\sqrt{2 N_{t,l}} + 1\big)+\nonumber\\
&d \log\left(4/d \log\!\left(2\frac{\prod_{l=1}^d \big(\log(N_{t,l})+3\big) C}{\delta} \right) + 1\right)+ d\Big(1+2\log\big(1/(C_\beta)\big)\Big)\Big\}\,.
\end{align*}
By~\eqref{ineq_refined_diag} and~\eqref{ineq_bootstrap_diag}, it holds 
\begin{align*}
    \PP(D)&\leq \PP(D\cap B^c) +\PP(B)\\
    &\leq \PP(A)+\PP(B)\leq \delta\,.
\end{align*}
We just proved that 
\begin{align}
       \PP \Bigg( \exists t\geq d,\, &|S_t|_{V_t^{-1}}^2/2 \geq \log(2/\delta)+(1+\beta)\sum_{l=1}^d \log\!\big(\log(N_{t,l})+3\big)+\nonumber\\
    & d \log\left(\frac{4}{d} \log\!\left(2\frac{\prod_{l=1}^d \big(\log(N_{t,l})+3\big) C}{\delta} \right) + 1\right)+ d\Big(1+2\log\big(1/(C_\beta)\big)\Big)\Bigg)\leq \delta\label{ineq_final_diag} \,.
\end{align}
To conclude we will specify \eqref{ineq_final_diag} in two ways. First if $t\leq n$, using that in this case $N_{t,l}\leq n$, we obtain 
\begin{align*}
 \PP \Bigg( \exists d\leq t\leq n,\, &|S_t|_{V_t^{-1}}^2/2 \geq \log(2/\delta)+(1+\beta)d \log\!\big(\log(n)+3\big)+\nonumber\\
    & d \log\left(\frac{4}{d}\log\!\left(2\frac{\big(\log(n)+3\big) C}{\delta} \right) + 1\right)+ d\Big(1+2\log\big(1/(C_\beta)\big)\Big)\Bigg)\leq \delta\,.
\end{align*}
And using again $\log(x+y)\leq \log(x) +x/y$, for $\beta=1/2$ and $\widetilde{C}:= 5\log(2C)$ we get
\[
 \PP\left(\exists t\geq d,\, |S_t|_{V_t^{-1}}^2/2 \geq \log(1/\delta)+d\log\!\big(4\log(1/\delta)+1\big)+6\sum_{l=1}^d \log\!\big(\log(N_{t,l})+3\big)+d\widetilde{C}\right)\leq \delta\,.
\]
\end{proof}
\section{Tracking results}
\label{app:proof_tracking}
This section is devoted to prove Proposition~\ref{prop:tracking_tw}. We will need one tool extracted from \citet{garivier2016optimal}, namely the next tracking lemma  which corresponds to Lemma~15 of the aforementioned reference.
\begin{lemma}
For all $t\geq 1$
\begin{equation*}
    \left| \sum_{s=1}^t w'(s) -N(t) \right|_{\infty}\leq K\,.
\end{equation*}
\label{lem:tracking}
\end{lemma}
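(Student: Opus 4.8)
The plan is to recognize this as the standard cumulative (``C-tracking'') guarantee and to prove it by a one-sided estimate combined with the zero-sum constraint on the per-arm deficits. I would introduce the deficit $D_a(t):=\sum_{s=1}^t w'_a(s)-N_a(t)$ for each arm $a$, so that the claim is exactly $\max_{a\in[1,K]}|D_a(t)|\leq K$. Since every $w'(s)$ lies in $\Sigma_K$, summing over $a$ gives $\sum_{a=1}^K\sum_{s=1}^t w'_a(s)=t=\sum_{a=1}^K N_a(t)$, hence the crucial identity $\sum_{a=1}^K D_a(t)=0$ for all $t$. The key dynamical observation is that $D_a(t+1)-D_a(t)=w'_a(t+1)-\ind_{\{A_{t+1}=a\}}$, so $D_a$ can only decrease at a step where arm $a$ is pulled, and then by at most $1-w'_a(t+1)\leq 1$, while at every other step it is nondecreasing.

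My strategy is first to lower bound each $D_a(t)$, and then to deduce the upper bound essentially for free from the zero-sum identity. For the lower bound, I observe that the minimum of $s\mapsto D_a(s)$ over $s\leq t$ is attained either at $s=0$, where $D_a(0)=0$, or immediately after a step $\tau$ with $A_\tau=a$ (a pull), since only pulls make $D_a$ decrease. It then suffices to show $D_a(\tau)\geq -1$ at every such pull. If $\tau\leq K$ (initialization), a direct computation gives $D_a(\tau)=\tau/K-1\geq 1/K-1\geq -1$. If $\tau> K$, the sampling rule \eqref{eq:sampling_rule_tracking} means $a$ maximizes $b\mapsto \sum_{s=1}^{\tau}w'_b(s)-N_b(\tau-1)=D_b(\tau-1)+w'_b(\tau)$; since these quantities sum to $1$ over $b\in[1,K]$ (using $\sum_b D_b(\tau-1)=0$ and $w'(\tau)\in\Sigma_K$), their maximum is at least the average $1/K$, so $D_a(\tau-1)+w'_a(\tau)\geq 1/K$ and therefore $D_a(\tau)=D_a(\tau-1)+w'_a(\tau)-1\geq 1/K-1\geq -1$. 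Since $D_a$ only increases after $\tau$, this gives $D_a(t)\geq -1$ for all $t$.

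Finally I would combine the two ingredients: from $\sum_b D_b(t)=0$ we get $D_a(t)=-\sum_{b\neq a}D_b(t)\leq (K-1)$, using $D_b(t)\geq -1$ for each $b$. Together with $D_a(t)\geq -1$ this yields $|D_a(t)|\leq K-1\leq K$ for every $a$, which is the claim. The only genuinely delicate points are the bookkeeping around the argmax at the last pull time and the separate treatment of the initialization steps; the conceptual heart of the argument, however, is the realization that one needs only a one-sided control of the deficits, because the opposite inequality is automatically forced by the constraint $\sum_a D_a(t)=0$.
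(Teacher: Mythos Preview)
Your proof is correct. The paper itself does not prove this lemma but simply cites Lemma~15 of \citet{garivier2016optimal}; your argument is precisely the standard one from that reference --- a one-sided control $D_a(t)\geq -1$ obtained by analyzing the argmax rule at the last pull time, combined with the zero-sum identity $\sum_a D_a(t)=0$ to force the other inequality --- and in fact yields the slightly sharper bound $|D_a(t)|\leq K-1$.
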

\begin{proof}[Proof of Proposition~\ref{prop:tracking_tw}]
 Thanks to Lemma~\ref{lem:tracking} and the definition of the weights in \eqref{eq:sampling_rule_forced_exploration} we have
 \begin{align*}
     \left| \sum_{s=1}^t \tw(s)- N(t) \right|_{\infty}&=\left|\sum_{s=1}^t w'(s) -\frac{\gamma_s}{1-\gamma_s}\pi+\frac{\gamma_s}{1-\gamma_s}w'(s) -N(t) \right|_{\infty}\\
     &\leq \left|\sum_{s=1}^t w'(s) -N(t) \right|_{\infty}+\sum_{s=1}^t\frac{\gamma_s}{1-\gamma_s}|\pi - w'(s)|_{\infty}\\
     &\leq K+\sum_{s=1}^t 2 \gamma_s\\
     &\leq K+\sqrt{T}\leq 2 K \sqrt{T}\,,
 \end{align*}
 where in the last lines we used that $\gamma_t=1/(4\sqrt{t})$ and a comparison series integral. This proves the first part of the proposition, for the second part we just use that $w'_a(t)\geq \gamma_t/K $ and Lemma~\ref{lem:tracking},
 \begin{align*}
     N_a(t)&\geq \sum_{s=1}^t w_a'(s) -\left| N_a(t) - \sum_{s=1}^t w'_a(s) \right|\\
     &\geq \sum_{s=1}^t\frac{\gamma_s}{K}-K\geq\frac{\sqrt{t+1}-1}{2K}-K \geq \frac{\sqrt{t}}{4 K}-2K\,.
 \end{align*}
\end{proof}
\section{Online Concave Optimization}
\label{app:proof_online_regret}
We consider the classical setting of online optimization on the simplex $\Sigma_K$. Consider a sequence of gain $f_t \in \R^K$ such that $0\leq f_{t,a}\leq C_t$, for some constant $C_t$. The objective is to minimize the regret against any constant strategy $\wstar\in\Sigma_K$,
\[
\sum_{t=1}^T f_t\cdot (\wstar-w_t)\,.
\]
To this aim we can use the Exponential Weights algorithm: let $w_1=\pi$ be the uniform distribution and define the other weights as follow
\[
 w_{t+1} = \argmax_{w\in\Sigma_K} \eta_{t+1} \sum_{s=1}^{t} w \cdot f_s-\kl(w,\pi)\,,
\]
where $\eta_t$ is the learning rate. There is a closed formula for these weights
\begin{equation}
    \label{eq:closed_formula_exp}
    w_{t+1,a}=\frac{e^{\eta_{t+1} G_{t,a}}}{\sum_{b=1}^K e^{\eta_{t+1} G_{t,b}}}\,,
\end{equation}
where $G_t=\sum_{s=1}^t f_s$ with the convention $G_0=0$. The next lemma is a simple adaptation of the Theorem~2.4 of \cite{bubeck2011introduction}. We add its proof for the sake of completeness.
\begin{lemma}
If $\eta_t$ is non-increasing, for all $\wstar\in\Sigma_K$,
\[
\sum_{t=1}^T f_t\cdot (\wstar-w_t)\leq \frac{\log(K)}{\eta_T}+\sum_{t=1}^T 2\eta_t C_{t}^2\,.
\]
\label{lem:regret_online_linear}
\end{lemma}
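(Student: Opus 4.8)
The plan is to run the textbook exponential-weights potential argument, taking care of the time-varying learning rate. Set $G_t=\sum_{s=1}^t f_s$ (with $G_0=0$) and introduce the rescaled log-partition potential
\[
\Phi_t \;=\; \frac{1}{\eta_t}\log\!\Big(\tfrac{1}{K}\textstyle\sum_{a=1}^K e^{\eta_t G_{t,a}}\Big),\qquad t=0,\dots,T,
\]
with the harmless convention $\eta_0:=\eta_1$; since $G_0=0$ the argument of the logarithm equals $1$, so $\Phi_0=0$. From the closed formula \eqref{eq:closed_formula_exp} one reads off $w_{t,a}=e^{\eta_t G_{t-1,a}}/\sum_b e^{\eta_t G_{t-1,b}}$ for every $t\geq 1$ (which is $w_1=\pi$ when $t=1$). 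The whole proof then amounts to bounding $\Phi_T$ from below by the comparator and from above by telescoping.

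For the lower bound I would use the Fenchel characterisation of the log-sum-exp on the simplex, $\log\!\big(\tfrac{1}{K}\sum_a e^{y_a}\big)=\sup_{w\in\Sigma_K}\{w\cdot y-\kl(w,\pi)\}$, with $y=\eta_T G_T$ and the choice $w=\wstar$; dividing by $\eta_T$ and using $\kl(\wstar,\pi)\leq\log K$ (because $\pi$ is uniform) gives $\Phi_T\geq \wstar\cdot G_T-\log(K)/\eta_T$.

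For the upper bound I would telescope, $\Phi_T=\sum_{t=1}^T(\Phi_t-\Phi_{t-1})$, and split each increment as $\Phi_t-\Phi_{t-1}=A_t+B_t$: here $B_t$ is obtained by keeping the rate fixed at $\eta_t$ and replacing $G_{t-1}$ by $G_t=G_{t-1}+f_t$, while $A_t$ is obtained by keeping the cumulative gain fixed at $G_{t-1}$ and replacing the rate $\eta_{t-1}$ by $\eta_t$. The closed form of $w_t$ turns the first piece into $B_t=\frac{1}{\eta_t}\log\!\big(\sum_a w_{t,a}\,e^{\eta_t f_{t,a}}\big)$; since $0\leq f_{t,a}\leq C_t$, Hoeffding's lemma (applied to $f_{t,\cdot}$ under the law $w_t$) gives $\log\!\big(\sum_a w_{t,a}e^{\eta_t f_{t,a}}\big)\leq\eta_t\,w_t\cdot f_t+\eta_t^2C_t^2/8$, hence $B_t\leq w_t\cdot f_t+2\eta_t C_t^2$. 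For the second piece the key observation is that, for fixed $x\in\R^K$, the map $\eta\mapsto\tfrac{1}{\eta}\log\!\big(\tfrac{1}{K}\sum_a e^{\eta x_a}\big)$ is non-decreasing on $(0,\infty)$: it is the slope, taken from the origin, of the chord of the convex function $\eta\mapsto\log\!\big(\tfrac{1}{K}\sum_a e^{\eta x_a}\big)$, which vanishes at $\eta=0$, and chord slopes of a convex function from a fixed point are monotone; since $\eta_t\leq\eta_{t-1}$ this forces $A_t\leq 0$. Summing over $t$ yields $\Phi_T\leq\sum_{t=1}^T w_t\cdot f_t+2\sum_{t=1}^T\eta_t C_t^2$.

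Combining the two bounds on $\Phi_T$ and rearranging gives exactly $\sum_{t=1}^T f_t\cdot(\wstar-w_t)\leq\log(K)/\eta_T+2\sum_{t=1}^T\eta_t C_t^2$. I do not expect a genuinely hard step: the only points that need a little care are the monotonicity of the rescaled log-sum-exp in $\eta$ — which is precisely what allows the learning-rate correction terms $A_t$ to be discarded — and the index bookkeeping that makes $\eta_T$, rather than $\eta_{T+1}$, appear in the denominator of the final bound; everything else is the standard computation underlying Theorem~2.4 of \citet{bubeck2011introduction}.
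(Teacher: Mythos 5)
Your proposal is correct and follows essentially the same route as the paper: the same potential $\Phi_t(\eta)=\frac{1}{\eta}\log\big(\frac{1}{K}\sum_a e^{\eta G_{t,a}}\big)$, Hoeffding's lemma for the per-round term, and monotonicity of $\eta\mapsto\Phi_t(\eta)$ to discard the learning-rate correction terms. The only cosmetic differences are that you telescope $\Phi_t-\Phi_{t-1}$ directly and justify monotonicity by a chord-slope argument (and extract the comparator via the Gibbs variational formula), whereas the paper decomposes $-w_t\cdot f_t$, applies an Abel transformation, and verifies monotonicity by computing $\Phi_t'(\eta)=\eta^{-2}\kl(w_t^\eta,\pi)\geq 0$.
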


\begin{proof}
We decompose the following quantity in two terms 
\begin{equation}
\label{eq:decomposition_regret}
    -w_t\cdot f_t = \frac{1}{\eta_t} \log \EE_{a\sim w_t} e^{\eta_t (f_{t,a}-\EE_{b\sim w_t}f_{t,b})}-\frac{1}{\eta_t}\log\EE_{a\sim w_t} e^{\eta_t f_{t,a}}\,.
\end{equation}
To bound the first term we use the Hoeffding inequality
\begin{equation}
\label{eq:regret_first_term}
    \frac{1}{\eta_t} \log \EE_{a\sim w_t} e^{\eta_t (f_{t,a}-\EE_{b\sim w_t}f_{t,b})}\leq 2\eta_t C_t^2\,.
\end{equation}
For the second term, we consider the potential function 
\[
\Phi_t(\eta)=\frac{1}{\eta}\log\left( \frac{1}{K} \sum_{a=1}^{K} e^{\eta G_{t,a}}\right)\,,
\]
with the convention $\Phi_0(t)=0$. Thanks to~\eqref{eq:closed_formula_exp} we have 
\begin{align}
-\frac{1}{\eta_t}\log\EE_{a\sim w_t} e^{\eta_t f_{t,a}} &=-\frac{1}{\eta_t}\log\frac{\sum_{a=1}^K e^{\eta_t G_{t, a}}}{\sum_{a=1}^K e^{\eta_t G_{t-1, a}}} \nonumber\\
&= \Phi_{t-1}(\eta_t) -\Phi_t(\eta_t) \label{eq:regret_second_term}\,.
\end{align}
Putting together \eqref{eq:decomposition_regret}, \eqref{eq:regret_first_term}, \eqref{eq:regret_second_term} and summing over $t$ we obtain
\[
\sum_{t=1}^T f_t\cdot (\wstar-w_t)\leq \sum_{t=1}^T 2\eta_t C_t^2+\sum_{t=1}^T \big(\Phi_{t-1}(\eta_t)-\Phi_{t}(\eta_t)\big)+\sum_{t=1}^T f_t\cdot \wstar\,.
\]
An Abel transformation on the penultimate term of the previous inequality leads to 
\[
\sum_{t=1}^T \big(\Phi_{t-1}(\eta_t)-\Phi_{t}(\eta_t)\big) = \sum_{t=1}^{T-1}\big( \Phi_t(\eta_{t+1})-\Phi_t(\eta_t)\big)-\Phi_T(\eta_T)\,,
\]
where we used that $\Phi_0(\eta_1)=0$. Since it holds that
\begin{align*}
    -\Phi_T(\eta_T)=\frac{1}{\eta_T}\log(K)-\frac{1}{\eta_T}\log\left(\sum_{a=1}^K e^{\eta_T G_{T,a}}\right)
    \leq \frac{1}{\eta_T}\log(K)-\max_{a\in[1,K]}G_{T,a}\,,
\end{align*}
we get 
\[
\sum_{t=1}^T f_t\cdot (\wstar-w_t)\leq \frac{\log(K)}{\eta_T}+\sum_{t=1}^T 2\eta_t C_t^2+\sum_{t=1}^{T-1} \big(\Phi_{t}(\eta_{t+1})-\Phi_{t}(\eta_t)\big)\,.
\]
To conclude it remains to show that $\Phi_t(\cdot)$ is non-decreasing for all $t$ since $\eta_t$ is non-increasing. To this end we just check that $\Phi'_t(\eta)\geq 0$,
\begin{align*}
    \Phi_t'(\eta)&=\frac{-1}{\eta^2}\log\left(\frac{1}{K}\sum_{a=1}^K e^{\eta G_{a,t}}\right)+\frac{1}{\eta}\frac{\sum_{a=1}^{K}e^{\eta G_{t,a}}G_{t,a}}{\sum_{a=1}^{K}e^{\eta G_{t,a}}}\\
    &= \frac{1}{\eta^2}\kl(w_t^\eta,\pi)\geq 0\,,
\end{align*}
where $w_{t,a}^\eta= e^{\eta G_{t,a}}/(\sum_{b=1}^K e^{\eta G_{t,b}})$.
\end{proof}
We are now ready to prove Proposition~\ref{prop:regret_bound}.
\begin{proof}[Proof of Proposition~\ref{prop:regret_bound}]
We will use Lemma~\ref{lem:regret_online_linear} with the choices 
\begin{equation*}
C_t=\begin{cases}
M\sqrt{t}&\text{ if } t< g(T)\\
L &\text{ else}
\end{cases},
\qquad \eta_t=\frac{1}{\sqrt{t}}\,.
\end{equation*}
Indeed thanks to Assumption~\ref{assp:bounded_gradient} and the definition of $\cE_\epsilon(T)$ we know that $0\leq\nabla_a F\big(\tw(t),\hmu(t)\big)\leq C_t $ on this event. Therefore, using Lemma~\ref{lem:regret_online_linear} up to a translation of all the indices by $K-1$, we obtain the following regret bound 
\begin{equation}
\label{eq:regret_beg_1}
    \sum_{t=K}^T \clip_s\!\Big(\nabla F \big(\tw(t),\hmu(t)\big)\Big)\cdot\big(\wstar(\mu)-\tw(t)\big)\leq \log(K)\sqrt{T}+\sum_{t=K}^{T}\frac{2 C_{t}}{\sqrt{t}}\,.
\end{equation}
It remains to control the terms inside the sums for $t\leq g(T)$. Using that the clipped sub-gradient is bounded by $C_t$ and Holder's inequality, we have
\begin{align}
    \sum_{t=K}^{g(T)-1}\Big|\clip_s\!\Big(\nabla F \big(\tw(t),\hmu(t)\big)\Big)\cdot\big(\wstar(\mu)-\tw(t)\big)\Big|&\leq \sum_{t=K}^{g(T)-1} K M \sqrt{t}\leq K M \int_{x=0}^{T^{1/4}}  \sqrt{x}\diff{x}\nonumber\\
    &=  K M \frac{2 T^{3/8}}{3}\leq K M \sqrt{T}\,.
    \label{eq:first_sum_regret_gt}
\end{align}
Similarly, one obtains, using the definition of $C_t$
\begin{align}
    \sum_{t=1}^T \frac{2 C_t^2}{\sqrt{t}}&\leq  \sum_{t=g(T)}^T \frac{2 L^2}{\sqrt{t}}+2 M^2\sum_{t=1}^{g(T)-1}  \sqrt{t}\nonumber\\
    &\leq \int_{0}^T \frac{2L^2}{\sqrt{x}}\diff{x}+2M^2\int_{0}^{T^{1/4}}\sqrt{x}\diff{x}\nonumber\\
    &=4L^2\sqrt{T}+\frac{4 M^2}{3}T^{3/8}\leq (4 L^2+2 M^2)\sqrt{T}.
    \label{eq:second_sum_regret_gt}
\end{align}
Thus, combining \eqref{eq:regret_beg_1}, \eqref{eq:first_sum_regret_gt} and \eqref{eq:second_sum_regret_gt}, we get 
\begin{equation*}
    \sum_{t=g(T)}^T \nabla F \big(\tw(t),\hmu(t)\big)\cdot\big(\wstar(\mu)-\tw(t)\big)\leq \underbrace{(\log(K)+K M + 4 L^2+2 M^2)}_{:=C_0}\sqrt{T}\,.
\end{equation*}
Note that the clipping has no effects since $t\geq g(T)$ and $T\geq T_M $. The concavity of $F\big(\cdot,\hmu(t)\big)$ allows us to conclude
\[\sum_{t=g(T)}^T  F\big(\wstar,\hmu(t)\big)- F\big(\tw(t),\hmu(t)\big)\leq C_0\sqrt{T}\,.
\]
\end{proof}

\section{Other Proofs}
\label{app:other_proofs}
We regroup in this section proofs of auxiliary results.
\subsection{Technical lemmas}

\begin{proof}[Proof of Lemma~\ref{lem:invers_log_log}]
Let $C_3(\epsilon)>0$ a constant that depends on $\epsilon$ and $K$ be such that for all $T\geq C_3(\epsilon)$,
\begin{equation*}
    6K \log\big(\log(T)+3\big)+K\widetilde{C}\leq \epsilon T\,.
\end{equation*}
Then, using that $N_a(T)\leq T$, for all $a$, for 
\[
T\geq \max\left(C_3(\epsilon),\frac{\log(1/\delta)+K\log\!\big(4\log(1/\delta)+1\big)}{T^\star(\mu)^{-1}-6\epsilon}\right)\,,
\]
 it holds 
 \begin{align*}
     \frac{\beta\big(N(T),\delta\big)}{T}&\leq \frac{\log(1/\delta)+K\log\!\big(4\log(1/\delta)+1\big)+6 K \log\big(\log(T)+3)+K\widetilde{C}}{T}\\
     &\leq \frac{\log(1/\delta)+K\log\!\big(4\log(1/\delta)+1\big)}{T}+\epsilon\\
     &\leq T^\star(\mu)^{-1}-5\epsilon\,,
 \end{align*}
 which concludes the proof.
 \end{proof}

 \begin{proof}[Proof of Lemma~\ref{lem:deviation_E_epsilon}]
 It is an adaptation of the proof of Lemma~19 by \citet{garivier2016optimal} with the Chernoff inequality for Gaussian distributions. We have 
 \begin{align*}
     \PP\mu\big(\cE_\epsilon(T)^{c}\big)& \leq \sum_{t=g(T)}^{T}\PP_\mu\big(\hmu(t)\notin \cB_{\infty}(\mu,\kappa_\epsilon)\big)\\
     &= \sum_{t=g(T)}^T\sum_{a=1}^K \left(\PP_\mu\big(\hmu_a(t)\leq \mu_a-\kappa_\epsilon\big)+\PP_\mu\big(\hmu_a(t)\geq \mu_a+\kappa_\epsilon\big)\right)\,.
 \end{align*}
 Thanks to~\eqref{eq:tracking_tw} we know that for all $a$, $\sqrt{t}/(4K)-2K\leq N_a(t)\leq t$. Let denote by $\hmu_{a,n}$ the empirical mean of the first $n$ samples from arm $a$ (such that $\hmu_a(t)=\hmu_{a,N_a(t)}$). Using the union bound then Chernoff inequality, we get 
 \begin{align*}
     \PP_\mu\big(\hmu_a(t)\leq \mu_a-\kappa_\epsilon\big)&\leq \sum_{\sqrt{t}/(4K)-2K\leq n\leq t} \PP_\mu(\hmu_{a,n}\leq \mu_a-\kappa_\epsilon)\\
     &\leq \sum_{\sqrt{t}/(4K)-2K\leq n\leq t} e^{-n \kappa_\epsilon^2/2}\leq \frac{e^{-(\sqrt{t}/(4K)-2K) \kappa_\epsilon^2/2}}{1- e^{-\kappa_\epsilon^2/2}}\\
     &\leq \frac{2}{\kappa_\epsilon^2}e^{-(\sqrt{t}/(4K)-2K-1) \kappa_\epsilon^2/2}\,.
 \end{align*}
similarly 
\[
\PP_\mu\big(\hmu_a(t)\geq \mu_a+\kappa_\epsilon\big)\leq \frac{2}{\kappa_\epsilon^2}e^{-(\sqrt{t}/(4K)-2K-1) \kappa_\epsilon^2/2}\,.
\]
Thus for the choice of the constants $C_4(\epsilon)$ and $C_5(\epsilon)$
\[
C_4(\epsilon):= \frac{\kappa_\epsilon^2}{16K}\qquad C_5(\epsilon):= \frac{4 K}{\kappa_\epsilon^2}e^{(2K+1) \kappa_\epsilon^2/2}
\]
it holds
\[
\PP_\mu\big(\cE_\epsilon(T)^{c}\big) \leq \sum_{t=g(T)}^{T} C_5(\epsilon)e^{-C_4(\epsilon) 4\sqrt{t}}\leq C_5(\epsilon) T e^{-C_4(\epsilon) 4\sqrt{g(T)}}\leq C_5(\epsilon) T e^{-C_4(\epsilon) T^{1/8}}\,.
\]
 \end{proof}
 \begin{proof}[Proof of Equation~\ref{eq:get_ride_of_gT}]
 Using the triangular inequality we have 
\begin{align*}
    \left|w(T)- \frac{1}{\tT} \sum_{t=g(T)}^T\tw(t)\right|_{\infty}\leq  \left|w(T)- \frac{1}{T} \sum_{t=1}^T\tw(t)\right|_{\infty}+  \left|\frac{1}{T} \sum_{t=1}^T\tw(t)- \frac{1}{\tT} \sum_{t=g(T)}^T\tw(t)\right|_{\infty}\,.
\end{align*}
 It remains to notice that
  \begin{align*}
\left|\frac{1}{T} \sum_{t=1}^T\tw(t) -\frac{1}{\tT} \sum_{t=g(T)
}^T\tw(t) \right|_{\infty}&\leq \left|\frac{1}{T} \sum_{t=1}^T\tw(t) -\frac{1}{T} \sum_{t=g(T)
}^T\tw(t) \right|_{\infty}+ \left|\frac{1}{T} \sum_{t=g(T)}^T\tw(t) -\frac{1}{\tT} \sum_{t=g(T)
}^T\tw(t) \right|_{\infty}\\
&\leq \frac{g(T)}{T}+ \left(\frac{1}{\tT}-\frac{1}{T}\right)\tT\leq 2\frac{g(T)}{T}\\
&\leq \frac{2}{\sqrt{T}}\,,
\end{align*}
where in the last line we used $g(T)\leq \sqrt{T}$, by definition.
 \end{proof}
\subsection{Proof of Proposition~\ref{prop:regularity}}
The fact that there exists $\kappa<\kappa_0$ such that $\cB_{\infty}(\mu,\kappa)\subset\cS_{i(\mu)}$ is just a consequence of the fact that $\cS_{i(\mu)}$ is open. For such $\kappa$ we know that for any $\mu'\in\cB_{\infty}(\mu,\kappa)$ 
\[F(w,\mu')=\min_{i\neq i(\mu)}\inf_{\lambda\in\cS_i} \sum_{a=1}^K w_a\d(\mu_a',\lambda_a)\,.\]
Then thanks to Theorem~4 of \citet{degenne2019pure} (1.), for all $i$, the functions
\[
(w,\mu')\to \inf_{\lambda\in\cS_i} \sum_{a=1}^K w_a\d(\mu_a',\lambda_a)\,,
\]
are continuous on $\Sigma_K\times\Bar{\cB}_{\infty}(\mu,\kappa/2)$ (where $\Bar{B}$ denotes the closure of the set $B$), thus $F$ is continuous and then uniformly continuous on this compact set. Thus for all $\epsilon>0$ the exists $\kappa_\epsilon\leq \kappa/2$ such that 
\[
|\mu'-\mu''|_{\infty}\leq \kappa_\epsilon \Rightarrow |F(w,\mu')-F(w,\mu'')|\leq \epsilon\,.
\]

\subsection{Counter example for Assumption~\ref{assp:bounded_gradient}}
\label{app:counter_example}
We now present an example of problem where the sub-gradients can be unbounded. We set $\cM = \R^2$, $\cI=[1,2]$ and 
\[
\cS_1=\cB_\infty\big( (0,0), 1/4\big) \qquad \cS_2 = \{ (x,y)\in\cM:\ x>0,\,y>1/x\}\,.
\]
For the bandit problem $\mu = (0,0)$ we have $i(\mu)=1$ and 
\[
F(w,\mu) = \frac{1}{2} \frac{w_2^2}{w_1} + \frac{1}{2} \frac{w_1^2}{w_2}\,.
\]
Thus the gradient of $F(\cdot,\mu)$ at $w$ in the interior of the simplex is 
\[
\nabla F(w,\mu)=
\begin{bmatrix}
\frac{w_1}{w_2}-\frac{1}{2}\frac{w_2^2}{w_1^2}\\
\frac{w_2}{w_1}-\frac{1}{2}\frac{w_1^2}{w_2^2}\end{bmatrix}\!,
\]
which is unbounded when for example $w_1$ goes to 0 and $w_2$ is fixed.
\end{document}